\documentclass{article} % For LaTeX2e

\usepackage[acronym]{glossaries}
\makeglossaries

\usepackage{amsmath}
\usepackage{amssymb}
\usepackage{amsthm}
\usepackage{dsfont}
\usepackage{mathtools}
\allowdisplaybreaks

\usepackage{graphicx}
\usepackage{wrapfig}

\usepackage{booktabs}
\usepackage{caption}

\usepackage{xcolor}
\usepackage{parskip} % helps with indenting and paragraph layout
\usepackage{soul} % use \ul to underline, fixes wrapping issues

\usepackage{caption}
\usepackage{subcaption}
\usepackage[round]{natbib}
\usepackage{hyperref}
\hypersetup{
  colorlinks=true,
  linkcolor=blue,
  citecolor=blue,
  urlcolor=blue
}
\usepackage{cleveref}

\definecolor{MidnightBlue}{rgb}{0.1, 0.1, 0.44}
\definecolor{mahogany}{rgb}{0.75, 0.25, 0.0}

\usepackage{listings}
\usepackage{fancyvrb}
\fvset{fontsize=\normalsize}

\lstdefinestyle{mystyle}{
    commentstyle=\color{OliveGreen},
    keywordstyle=\color{BurntOrange},
    numberstyle=\tiny\color{black!60},
    stringstyle=\color{MidnightBlue},
    basicstyle=\ttfamily,
    breakatwhitespace=false,
    breaklines=true,
    captionpos=b,
    keepspaces=true,
    numbers=left,
    numbersep=5pt,
    showspaces=false,
    showstringspaces=false,
    showtabs=false,
    tabsize=2
}
\usepackage{tikz}
\usetikzlibrary{shapes,decorations,arrows,calc,arrows.meta,fit,positioning}
\tikzset{
    -Latex,auto,node distance =1 cm and 1 cm,semithick,
    state/.style ={circle, draw, minimum width = 0.7 cm},
    detstate/.style ={rectangle, draw, minimum width = 0.7 cm, minimum height = 0.7 cm},
    point/.style = {circle, draw, inner sep=0.04cm,fill,node contents={}},
    bidirected/.style={Latex-Latex,dashed},
    el/.style = {inner sep=2pt, align=left, sloped}
}
\usepackage[algoruled, algo2e]{algorithm2e}
\usepackage{algorithm}
\usepackage{algorithmic}

\newcommand{\g}{\,|\,}

\newtheorem*{assumption*}{Assumption}

\newtheorem*{corollary*}{Corollary}

\newtheorem*{definition*}{Definition}
\newtheorem{lemma}{Lemma}
\newtheorem*{lemma*}{Lemma}

\newtheorem*{proposition*}{Proposition}
\newtheorem{theorem}{Theorem}
\newtheorem*{theorem*}{Theorem}

\renewcommand{\mid}{~\vert~}
\newcommand{\cL}{\mathcal{L}}
\newcommand{\cN}{\mathcal{N}}

\newcommand{\E}{\mathop{\mathbb{E}}} % need mathop so it works with /limits

\newcommand{\Var}{\mathbb{V}\textrm{ar}}
\newcommand{\Cov}{\mathbb{C}\textrm{ov}}
\usepackage{iclr2026_conference,times}

\usepackage[utf8]{inputenc} % allow utf-8 input
\usepackage[T1]{fontenc}    % use 8-bit T1 fonts
\usepackage{hyperref}       % hyperlinks
\usepackage{url}            % simple URL typesetting
\usepackage{booktabs}       % professional-quality tables
\usepackage{amsfonts}       % blackboard math symbols
\usepackage{nicefrac}       % compact symbols for 1/2, etc.
\usepackage{microtype}      % microtypography
\usepackage{xcolor}         % colors
\usepackage{amsmath}

\usepackage{tcolorbox}
\tcbuselibrary{skins, theorems, breakable}

\newtcolorbox{graybox}{
  enhanced,
  colback=black!4,        % very light gray fill
  colframe=black!4,       % frame same color as fill (invisible frame)
  arc=2pt,                % slightly rounded corners
  boxrule=0pt,            % no border line
  left=6pt, right=6pt, top=6pt, bottom=6pt,
}

\newtcolorbox{bluebox}{
  enhanced,
  colback=blue!5,
  colframe=blue!5,
  boxrule=0pt,
  arc=2pt,
  left=8pt, right=8pt, top=6pt, bottom=6pt,
  breakable,
  before skip=10pt, after skip=10pt,
}

\newtcolorbox{orangebox}{
  enhanced, colback=orange!8, colframe=orange!8,
  boxrule=0pt, arc=2pt,
  left=8pt, right=8pt, top=6pt, bottom=6pt,
  breakable, before skip=10pt, after skip=10pt,
}

\newtcolorbox{pinkbox}{
  enhanced, colback=magenta!5, colframe=magenta!5,
  boxrule=0pt, arc=2pt,
  left=8pt, right=8pt, top=6pt, bottom=6pt,
  breakable, before skip=10pt, after skip=10pt,
}

\newtcolorbox{tealbox}{
  enhanced, colback=teal!8, colframe=teal!8,
  boxrule=0pt, arc=2pt,
  left=8pt, right=8pt, top=6pt, bottom=6pt,
  breakable, before skip=10pt, after skip=10pt,
}

\newtcolorbox{greenbox}{
  enhanced, colback=green!6, colframe=green!6,
  boxrule=0pt, arc=2pt,
  left=8pt, right=8pt, top=6pt, bottom=6pt,
  breakable, before skip=10pt, after skip=10pt,
}

\newtcolorbox{yellowbox}{
  enhanced, colback=yellow!8, colframe=yellow!8,
  boxrule=0pt, arc=2pt,
  left=8pt, right=8pt, top=6pt, bottom=6pt,
  breakable, before skip=10pt, after skip=10pt,
}

\newtcolorbox{lavenderbox}{
  enhanced, colback=violet!5, colframe=violet!5,
  boxrule=0pt, arc=2pt,
  left=8pt, right=8pt, top=6pt, bottom=6pt,
  breakable, before skip=10pt, after skip=10pt,
}

\newcommand{\method}{SGFlow}
\usepackage{pifont}
\newcommand{\cmark}{\ding{51}}%
\newcommand{\xmark}{\ding{55}}%

\usepackage{hyperref}
\usepackage{url}

\title{Flow Map Learning via Nongradient Vector Flow}

\author{Mark Goldstein$^1$\thanks{\texttt{mgoldstein@flatironinstitute.org}},
Anshuk Uppal$^2$, Raghav Singhal$^3$, Aahlad Puli$^3$, \& Rajesh Ranganath$^3$
\\
$^1$Center for Computational Mathematics, Flatiron Institute.\\
$^2$Department of Applied Mathematics and Computer Science, Technical University of Denmark.\\
$^3$Courant Institute, New York University.\\
}

\iclrfinalcopy
\begin{document}

\maketitle

\definecolor{dgreen}{rgb}{0.0, 0.52, 0.34}
\newcommand{\dgreen}[1]{\color{dgreen}{#1}}
\definecolor{dblue}{rgb}{0.0, 0.32, 0.52}
\newcommand{\dblue}[1]{\color{dblue}{#1}}

\begin{abstract}
Diffusion and flow-based models benefit from simple regression losses, but inference incurs significant overhead because sampling requires integration. Consistency models address this by directly learning the flow maps along the ODE trajectory, opening a design space between one-step and many-step approaches. However, existing methods face computational challenges such as requiring model inverses or backpropagation through iterated model calls, and do not always prove that the desired ODE flow map is a solution to the loss. We introduce \method{}, an approach for learning flow maps that bypasses explicit invertibility constraints and expensive differentiation through model iteration. \method{} trains a model to compute both the ODE solutions and the implied velocity from scratch by following non-conservative dynamics with a stationary point at the desired flow map. On the CIFAR image benchmark, no single method attains the best FID at every step count: \method{} attains the best FID at 10 sampling steps and remains competitive with flow matching, Meanflow, and Lagrangian map matching at other step counts, while being the only one with a proven stationary-point guarantee for its stopgrad-based dynamics.
\end{abstract}

\section{Introduction}

Diffusion and flow models \citep{sohl2015deep, ho2020denoising, song2020score, kingma2021variational, albergo2022building, 
singhal2023diffuse,pandey2023complete,bartosh2024neural,singhal2024s, albergo2023stochastic, lipman2022flow, liu2022flow} have improved generation in  domains such as proteins 
\citep{abramson2024accurate}
and images
\citep{peebles2023scalable,esser2403scaling}. Sampling from these models typically requires numerically integrating an ordinary or stochastic differential equation. Numerical integration requires multiple forward passes of a neural network, leading to increased sampling latency and cost. 

To ameliorate this generation cost by changing the training,  recent approaches for consistency modeling and map matching 
\citep{song2023consistency,song2023improved,kim2023consistency,lu2024simplifying,boffi2024flow,boffi2025build}
aim to learn direct mappings from noise to intermediate or final data points along trajectories defined by probability flow ODEs, thereby avoiding costly integration.  However, the methods have their respective complexities. 
For example, flow map matching requires model invertibility, while consistency models need either to map in one step or introduce extra steps that leave the target ODE trajectory.

We introduce \method{} (for StopGrad Flow), an approach that builds on flows and map matching methods
and 
\begin{itemize}
 \item Has true flow map as a unique stationary point 
  \item Does not restrict the class of neural networks used (e.g., to invertible functions)
  \item Does not require auxiliary losses involving invertibility or adversarial optimization
  \item Does not require optimizing through nested calls to the model
  \item Allows for generation along the ODE trajectory with any number of steps
\end{itemize}

Existing methods for learning flow maps fall into a few categories in terms of their challenges; all challenges relate to the idea that a flow map is characterized by certain derivative properties and that losses minimize squared error to make these properties hold. Flow map matching and related methods rely on a fundamental relationship between invertible mappings and ordinary differential equations (ODEs). This relationship typically requires explicitly computing both the forward map (the model being trained) and its inverse during training, complicating training, or requires expensive backpropagation through nested model calls. \cite{boffi2025build} propose stopgrad placements for the flow map matching losses of \cite{boffi2024flow} that bypass this expensive nested differentiation, but do not prove that the stopgrads preserve the stationary point at the true flow map. Meanflow \citep{geng2025mean} does not explicitly enforce the model inverse identities, and avoids backpropagation through forward-mode derivatives altogether, achieving good image generation performance with low step counts; Meanflow is also not shown to have a stationary point at the true flow map.

\method{} avoids the complexity of tracking a model and its inverse
 by exploiting an alternate identity involving only
Jacobian-vector products (JVPs) without inverse functions.
This identity allows us to formulate the objective purely in terms of the forward map, without needing explicit access to its inverse. Since solutions to ODEs naturally produce invertible mappings, the \method{} objective implicitly encourages invertibility without explicitly enforcing it. Thus, at optimality, \method{} yields a continuously differentiable function that precisely integrates the velocity field, directly generating the desired data distribution. 
We summarize the trade-offs among recent methods in \Cref{tab:comp} and in \Cref{sec:flowmap_related_work}.

Experimentally,  for a basic training setup using the same common architecture, we ask how flow matching, Meanflow, \method{}, and Lagrangian map matching
compare in moderate dimensions (CIFAR-10) on unconditional metrics (FID) when decreasing the number of sampling steps.

\begin{table}[t]
    \setlength{\tabcolsep}{3.15pt}
    \begin{graybox}
        \resizebox{\textwidth}{!}{
        \centering
    \begin{tabular}{lccccccc}
    \toprule
    \multicolumn{1}{l}{\textbf{Methods}} &
    \textbf{Multistep} &
    \textbf{Follows ODE} &
    \textbf{Sim. Free} &
    \textbf{Regression Loss} &
    \textbf{Inverse-Free} &
    \textbf{Prove Optimum} &
    \textbf{No Nesting}
    \\ 
    \midrule 
    Consistency Distillation \citep{song2023consistency}
    & \xmark & \cmark & \xmark & \cmark & \cmark & \cmark & \cmark
    \\ 
    \midrule 
    Consistency Training \citep{song2023consistency}
    & \xmark & \cmark & \cmark & \cmark & \cmark & \cmark & \cmark
    \\
    \midrule
    Consistency Trajectory Models \citep{kim2023consistency}
    & \cmark & \cmark & \xmark & \xmark & \cmark & \cmark & \cmark
    \\
    \midrule    
    L-FMM \citep{boffi2024flow}
    & \cmark & \cmark & \cmark & \cmark & \xmark & \cmark & \cmark
    \\ 
    \midrule
    LSD \citep{boffi2025build}
    & \cmark & \cmark & \cmark & \cmark & \cmark & \cmark & \xmark
    \\ 
    \midrule
    ESD \citep{boffi2025build}
    & \cmark & \cmark & \cmark & \cmark & \cmark & \cmark & \cmark
    \\
    \midrule
    PSD \citep{boffi2025build}
    & \cmark & \cmark & \cmark & \cmark & \cmark & \cmark & \xmark
    \\
    \midrule
    Meanflow \citep{geng2025mean}
    & \cmark & \cmark & \cmark & \cmark & \cmark & \xmark & \cmark
    \\
    \midrule 
    \textbf{\method{} (this work)}
    & \cmark & \cmark & \cmark & \cmark & \cmark & \cmark & \cmark
    \\ 
    \bottomrule
    \end{tabular}
    }
        \end{graybox}
    \vspace{0.1cm}
    \caption{\textbf{Comparison to prior works}. 
    We categorize flow map learning (or \textit{consistency modeling}) techniques 
    and our proposed \method{} method according to:
    (1) ability to adjust sampling steps post-training,
    (2) whether they follow the PF-ODE \citep{song2020score},
    (3) whether they allow simulation-free training,
    (4) whether their objectives use regression,
    (5) whether training is free of model inversion,
    (6) whether the true flow map is proven to be optimal or stationary,
    and (7) whether training avoids differentiation through nested model calls.
    See \Cref{sec:flowmap_related_work} for details.
    \label{tab:comp}}
\end{table}

\section{Background}
 Stochastic interpolants
\citep{lipman2022flow,albergo2023stochastic},
and more generally most diffusion and flow methods, hereafter just \textit{flows},
pose generative modeling as transport of a simple base density to a target density.
Interpolants tackle the problem as follows. For $t \in [0,1]$:
\begin{enumerate}
    \item  Choose ($\alpha_t$, 
$\sigma_t$) where $\alpha_0=\sigma_1=1$ and $\alpha_1=\sigma_0=0$.
Commonly, $\alpha_t=1-t$ and $\sigma_t=t$.

\item Define
$X_t = \alpha_t X_0 + \sigma_t X_1$ 
for base density $X_0 \sim q_0$ and data $X_1 \sim q_1$ 
(or vice versa).

\item Learn to produce new samples along the trajectory of densities.

\end{enumerate}
For a function $f$, let $\dot f_t := \frac{d}{dt}f_t$.
Thus $\dot X_t := \dot\alpha_t X_0 + \dot \sigma_t X_1$.
It follows that $X_t$ has density $q_t$ satisfying:
\begin{align}
\label{eq:continuity}
 \partial_t q_t(x) = - \nabla_x \cdot (q_t(x) v(t,x)), 
 \quad \quad 
 v(t,x) := \mathbb{E}[\dot X_t \mid X_t =x],
\end{align}
where $v$ is called the velocity.
The PDE in  \Cref{eq:continuity} is derived in the above works. To accomplish step three, one starts by making the observation that  a density satisfies \Cref{eq:continuity}  if and only if it is the density of the solution to the \textit{probability flow ODE} $dX=vdt$ integrated forward from $X_0 \sim q_0$ or in reverse from $X_1 \sim q_1$ \citep{albergo2024learning}.
One then proceeds by first approximating $v$ using the following (simulation-free) loss:
\begin{align}
    \label{eq:flowmatchingloss}
    \mathcal{L}_{v}(v_\theta) = \mathbb{E}\Big[\| v_\theta(t, X_t) - (\dot \alpha_t X_0 + \dot \sigma_t X_1)\|^2 \Big]_{X_t = \alpha_t X_0 + \sigma_t X_1},
\end{align} 
which has minimizer $v_\theta=v$ 
and then solving $dx=v_\theta dt$.

\paragraph{Background on Consistency Methods.}
Sampling from flows requires integration, where each step evaluates a neural network $v_\theta$ modeling a score, velocity, or similar. Knowing integrals of $v$ directly could, in principle, speed up sampling. The goal of consistency and map matching methods is to learn to map along the trajectory implied by the optimal $v$. We review an example here, with others in \Cref{sec:flowmap_related_work}. \cite{song2023consistency, song2023improved} seek to learn a mapping $\hat{g}$ that takes interpolant samples $X_t \sim q_t$ to $\widehat{X}_0$, the $t=0$ solution to $dx = v\,dt$ starting at $X_t$ (note that $\widehat{X}_0$ usually differs from the endpoint $X_0$ used to draw $X_t$). The loss measures the distance between modeled outputs at two nearby points. Let $\text{SG}[\hat g]$ indicate stopgrad. Then:
\begin{align}
    \text{Consistency}(\hat{g}) := \mathbb{E}_{q(X_t)}
    [
        \text{dist}(
        \hat{g}(t, X_t),
        \text{SG}[\hat{g}](t-\Delta t, \widehat{X}_{t - \Delta t})
        )
    ].
\end{align}
The target $\widehat{X}_{t - \Delta t}$ should come from integrating the true velocity a small step $\Delta t$ from $X_t$, but since $v$ is unknown, it is typically approximated using a pretrained $v_\theta$ or a $v_\theta$ derived jointly with $\hat g$ — increasing training cost and introducing approximation error. Allowing multistep sampling requires a re-noising step that takes the trajectory off the probability-flow ODE, so the resulting updates no longer correspond to integrating the PF-ODE. \cite{kim2023consistency} observe that this multistep approach ``exhibits degrading sample quality with increasing NFE, lacking a clear trade-off between computational budget (NFE) and sample fidelity''. Subsequent works introduce various training- and inference-time modifications to bridge the gap between one-step and many-step sampling \citep{song2023consistency, lu2024simplifying, kim2023consistency, boffi2024flow, sabour2025align, geng2025mean, zhou2025inductive}; see \Cref{sec:flowmap_related_work}.

\section{Method}

We present
\method{}, a method for learning to solve the probability flow ODE without adversarial training, without model inverse during training, without representing explicit derivative matrices, and without costly simulations from pretrained models.
\method{} trains a model to compute both the ODE solutions and the implied velocity from scratch by following non-conservative dynamics.

Consider a two-time map $f$ that for $t \leq u$ brings $X_t$ up to $X_u$ by solving the probability flow ODE $dx=vdt$.
Such an $f$ that integrates $v$ can be defined as follows:
\begin{align}
    f(t,u,x) &= x + \int_t^u v(s, X_s) ds =  x + \int_t^u v(s, f(t,s,x)) ds
\end{align} 
Differentiating the recursive form on the RHS 
w.r.t. $t$ using the total (material) derivative yields:
\begin{align}
\label{eq:pde}
    \partial_t f + (\partial_x f) v(t,x) = 0, \quad f(u,u,x)=x
\end{align}
This is uniquely solved at the true flow map $f$. We can square the left-hand side for a parameterized $f_\theta$ and take an expectation.
$X_t$ is sampled by drawing data $X_1$, noise $X_0$, and computing $X_t = \alpha_t X_0 + \sigma_t X_1$:
\begin{lavenderbox}
\begin{align}
 L 
    &:=
     \mathbb{E}_{X_t}[\| \partial_t f_\theta + (\partial_x f_\theta) v\|^2]
     =
    \mathbb{E}_{X_t}[\| \partial_t f_\theta + (\partial_x f_\theta) \mathbb{E}[\dot X_t | X_t]\|^2].
\end{align}
\end{lavenderbox}
The true map $f$ is the unique minimizer of this loss.
Using $v(t,x)= \mathbb{E}[\dot X_t | X_t=x]$, we can expand,
\begin{align}
    L 
    &= \mathbb{E}_{X_t}[\| \partial_t f_\theta + (\partial_x f_\theta) \dot X_t\|^2
    -
   \|(\partial_x f_\theta)(\dot X_t - \mathbb{E}[\dot X_t | X_t])\|^2]
   \label{eq:L}
\end{align}
For an underlying model $\tilde{f}_\theta$, we can then use the parameterization
\begin{align}
\label{eq:param}
 f_\theta(t,u,x) := x +(u-t) \tilde{f}_\theta(t,u,x)   
\end{align}
This parameterization automatically satisfies the boundary condition $f_\theta(u,u,x)=x$. The parameterization yields two additional properties:
\begin{itemize}
    \item time derivative: $\partial_t f_\theta(t,t,x) = -\tilde{f}_\theta(t,t,x)$
    \item Jacobian: $\partial_x f_\theta(t,t,x)=I$
\end{itemize}
Using these properties and evaluating at $t=u$,
we see that the minimization of \cref{eq:L} reduces to flow matching where $\tilde{f}_\theta(t,t,x)$ is trained to match the velocity:
\begin{align}
   L \big|_{t=u} = \mathbb{E}_{X_t}[\|\tilde{f}_\theta(t,t,X_t) - \dot X_t \|^2],
\end{align}
which reveals that for the true $f$, 
we have that
\begin{align}
 -\partial_t f(t,t,\cdot) = \tilde{f}(t,t,\cdot) = v(t, x) = \mathbb{E}[\dot X_t | X_t = x]   
\end{align}
This motivates replacing the unknown $v$ 
in \cref{eq:L}
with $\text{stopgrad}[\tilde{f}_\theta(t,t,\cdot)]$.
The stopgrad is used under the principle that since the original $v$ did not provide gradient updates for $f$, neither should a term that approximates it.
Let $\text{SG}()$ denote the $\text{stopgrad}()$ operator. The \textbf{\method{}} method follows parameter updates to $\theta$ by differentiating
\begin{lavenderbox}
   \begin{align}
    \label{eq:ourloss}
    L_{\text{sg}}
    := 
\mathbb{E}_{X_t}[\|(\partial_t f_\theta)+ (\partial_ x  f_\theta) \dot X_t \|^2
- 
      \| 
(\partial_x f_\theta)
(\dot X_t 
- \text{SG}[\tilde{f}_\theta]
)
\|^2].
    \end{align}
\end{lavenderbox}

Here, the model $f_\theta$ uses the parameterization from \cref{eq:param}.
The derivatives $\partial_t f_\theta$ and $\partial_x f_\theta$ are evaluated at $(t,u,X_t)$ and $\tilde{f}_\theta$ is evaluated at $(t,t,X_t)$.
The expectation is taken over $X_t$ sampled by drawing data $X_1$, noise $X_0$, and computing $X_t = \alpha_t X_0 + \sigma_t X_1$ and $\dot X_t = \dot \alpha_t X_0 + \dot \sigma_t X_1$.

In practice, the PDE in \cref{eq:pde} must hold for all pairs $t \leq u$.  Let $q(t,u)$ be a joint distribution with support over $ t \leq u$ and with positive probability on $t=u$. We take expectations over time and define
\begin{align*}
 \mathcal{L} = \mathbb{E}_{q(t,u)}[L],
 \quad\mathcal{L}_{\text{sg}} = \mathbb{E}_{q(t,u)}[L_{\text{sg}}]
\end{align*}
We now connect $\mathcal{L}$ and $\mathcal{L}_{\text{sg}}$ formally. \Cref{thm:thm1}
shows that parameter updates of $\mathcal{L}$ and $\mathcal{L}_{\text{sg}}$ are $0$ at the same solutions.

\begin{bluebox}
\begin{theorem}
\label{thm:thm1}
Let $q(t,u)$ be a joint distribution 
over time pairs with support over $t \leq u$ and with positive probability on $t=u$. 
Let the family $\mathcal{\tilde{F}}$ include functions $\tilde{f}$ that are 
continuously differentiable in all arguments. 
Let $X_t = \alpha_t X_0 + \sigma_t X_1$
and $\dot X_t = \dot\alpha_t X_0 + \dot\sigma_t X_1$.
Define $f(t,u,x) := x + (u-t)\tilde{f}(t,u,x)$.
Let expectations be computed over $q(X_0)q(X_1)$.
Let $sg$ stand for stop-gradient.
Define $\mathcal{L} = \mathbb{E}_{q(t,u)}[L]$
and
$\mathcal{L}_{\text{sg}} = \mathbb{E}_{q(t,u)}[L_{\text{sg}}]$.
Then $\tilde{f}^*$ is a stationary point of $\mathcal{L}_{\text{sg}}$ with respect to $\mathcal{\tilde{F}}$ if and only if $\tilde{f}^*$ is a stationary point of $\mathcal{L}$ with respect to $\mathcal{\tilde{F}}$.
\end{theorem}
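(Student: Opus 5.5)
We compute the first variations of $\mathcal{L}$ and $\mathcal{L}_{\text{sg}}$ at a point $\tilde f$ in a direction $\tilde h\in\tilde{\mathcal F}$, writing $h(t,u,x)=(u-t)\tilde h(t,u,x)$ for the induced perturbation of $f$. The stopgrad term freezes only the velocity surrogate $\tilde f(t,t,\cdot)$. So the two variations differ by one term, which comes from differentiating $v$ inside $\mathcal{L}$ versus treating $\text{SG}[\tilde f]$ as a constant in $\mathcal{L}_{\text{sg}}$. Expanding both squares and using the tower property with respect to $X_t$, we get
\begin{align*}
\delta\mathcal{L}[\tilde h] &= 2\,\mathbb{E}\big[\langle R_v,\ \partial_t h + (\partial_x h) v\rangle\big],\\
\delta\mathcal{L}_{\text{sg}}[\tilde h] &= 2\,\mathbb{E}\big[\langle R_{\hat v},\ \partial_t h + (\partial_x h)\hat v\rangle\big] + 2\,\mathbb{E}\big[\langle (\partial_x f)(\hat v - v),\ (\partial_x h)(\hat v - v)\rangle\big],
\end{align*}
where $v=\mathbb{E}[\dot X_t\mid X_t]$, $\hat v=\tilde f(t,t,\cdot)$, and $R_w:=\partial_t f+(\partial_x f)w$. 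The first step is to verify these identities. The cross terms with $\dot X_t - v$ vanish after conditioning, and the subtracted square in \cref{eq:ourloss} cancels the conditional variance exactly as in \cref{eq:L}.

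\emph{Step 1: a stationary point of either loss has $\hat v = v$.} We restrict to directions $\tilde h$ concentrated near the diagonal, using that $q$ puts positive mass on $t=u$. On the diagonal $h=0$, $\partial_x h=0$ and $\partial_t h=-\tilde h(t,t,x)$. Also $R_w|_{t=u}=w-\hat v$. Hence the diagonal contribution to $\delta\mathcal{L}$ is $-2\,\mathbb{E}[\langle v-\hat v,\tilde h(t,t,X_t)\rangle]$, and to $\delta\mathcal{L}_{\text{sg}}$ it is $0$ from the first term plus $0$ from the second. That is a problem: with stopgrad, the diagonal gradient comes from the full expression $\|\partial_t f+\partial_x f\,\dot X\|^2$, whose gradient at $t=u$ is $-2\mathbb{E}\langle \dot X_t-\hat v,\tilde h\rangle = -2\mathbb{E}\langle v-\hat v,\tilde h\rangle$. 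I must therefore recheck the first identity on the diagonal. The correct bookkeeping is that the $R$ in $\mathcal{L}_{\text{sg}}$ pairs with $\dot X_t$, giving $\mathbb{E}\langle R_{v}, \cdot\rangle$ plus the correction term $(\partial_x f)(\dot X_t-\hat v)$ paired with $(\partial_x h)(\dot X_t-\hat v)$. Redoing the expansion carefully is part of the first step.

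Either way, both variations contain the diagonal term $-2\,\mathbb{E}[\langle v-\hat v,\tilde h(t,t,X_t)\rangle]$ with positive weight $q(t=u)$. Off-diagonal directions cannot cancel it, because we can choose $\tilde h(t,u,x)=\phi(t,x)\psi_\epsilon(u-t)$ with $\psi_\epsilon$ a bump at $0$ and let $\epsilon\to0$. Continuity of all terms and the factor $(u-t)$ in $h$ make the off-diagonal contributions vanish. So stationarity of either loss forces $\mathbb{E}\langle v-\hat v,\phi\rangle=0$ for all $C^1$ $\phi$. Density of $C^1$ functions in $L^2(q_t)$ then gives $\hat v=v$ $q_t$-a.e.

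\emph{Step 2: conclude.} Once $\hat v=v$, the two first-variation formulas coincide term by term, because the extra stopgrad correction is quadratic in $\hat v - v$ and vanishes. Hence $\delta\mathcal{L}[\tilde h]=\delta\mathcal{L}_{\text{sg}}[\tilde h]$ for every $\tilde h$. Therefore $\tilde f^*$ is stationary for one loss iff it is stationary for the other, since by Step 1 each notion of stationarity already implies $\hat v = v$.

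The main obstacle is the careful expansion in the first step: getting the exact form of $\delta\mathcal{L}_{\text{sg}}$, and confirming that its diagonal restriction yields the flow-matching gradient. The bump-function localization argument, which shows that diagonal stationarity is implied by full stationarity, is the second delicate point. It requires integrability assumptions, such as finite second moments of $X_0,X_1$, so that dominated convergence applies.
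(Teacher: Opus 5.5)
\paragraph{The displayed formula for $\delta\mathcal{L}_{\text{sg}}$ is wrong.}
Your overall strategy is the paper's. A localization near the diagonal forces $\tilde f^*(t,t,\cdot)=v(t,\cdot)$ at a stationary point, and after that the two first variations agree. Your bump $\psi_\epsilon(u-t)$ plays the role of the paper's soft indicator $I_\eta$, and a uniform bound on $s\,\psi_\epsilon'(s)$ plays the role of its bound on $|(u-t)\partial_t I_\eta|$.

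The formula is not wrong only on the diagonal. Relative to $\delta\mathcal{L}$, it adds terms linear in $\hat v-v$; on the diagonal these exactly cancel the flow-matching term, which is the spurious zero you found. It also has a quadratic term of the wrong sign and size. So the ``correction quadratic in $\hat v-v$'' that Step 2 relies on is never actually derived.

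The root error is that $v$ is not perturbed in $\mathcal{L}$ at all. The only difference between the two functionals is that the frozen $\hat v$ replaces $v$ in the subtracted square. Write $\dot X_t-\hat v=\xi-(\hat v-v)$ with $\xi=\dot X_t-v$ and $\mathbb{E}[\xi\mid X_t]=0$. The cross terms then vanish after conditioning on $(t,u,X_t)$. With the stopgrad slot frozen at $\tilde g$, the functional equals $\mathcal{L}[\tilde f]-\mathbb{E}\|\partial_x f\,(\tilde g(t,t,X_t)-v)\|^2$. Hence
\[
\delta\mathcal{L}_{\text{sg}}[\tilde f;\tilde h]=\delta\mathcal{L}[\tilde f;\tilde h]-2\,\mathbb{E}\big[\langle \partial_x f\,(\hat v-v),\,(u-t)\,\partial_x\tilde h\,(\hat v-v)\rangle\big].
\]
This one identity does everything you need:
\begin{itemize}
\item The correction carries the factor $(u-t)$. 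So it is zero on the diagonal, and it tends to zero along your bump directions by dominated convergence. Your localization for $\mathcal{L}_{\text{sg}}$ therefore reduces to the one for $\mathcal{L}$.
\item The correction is quadratic in $\hat v-v$, which is exactly Step 2.
\end{itemize}

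\paragraph{With this repair, the proof is correct and slightly tighter than the paper's.}
The paper proves velocity matching only for $\mathcal{L}$. It uses the specific direction $I_\eta\,(\tilde f^*(t,t,x)-v(t,x))$, which is admissible only because it assumes $v\in C^1$. It handles the converse by splitting $\delta\mathcal{L}[\tilde f^*;h]$ for a single $h$ into diagonal and off-diagonal parts. Its Case 2a then infers velocity matching from the vanishing of the diagonal part for that one $h$, which does not follow.

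Your approach avoids both issues. You use test functions $\phi(t,x)\psi_\epsilon(u-t)$ together with density of $C^1$ in $L^2$. You also apply the localization to both functionals symmetrically.

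\paragraph{A caveat shared with the paper.}
Localization gives $\hat v=v$ only for those $t$ charged by the diagonal part of $q$. The correction term, however, integrates over $t$ drawn from the off-diagonal part. So Step 2 tacitly needs the diagonal $t$-marginal to dominate the off-diagonal one, for example full support together with continuity of $v$.
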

\end{bluebox}
This is shown in  
\Cref{appsec:flowmap_thm1}.

\paragraph{Intuition.}

The purpose of the theorem is to establish that $\mathcal{L}_{\text{sg}}$ has the same set of solutions as $\mathcal{L}$ despite not having access to $v$;
here, \textit{solution} is defined as a stationary point.
The intuition is that, despite the stopgrad, when $t=u$, $\mathcal{L}_{\text{sg}}$ tries to match the velocity. We show that $\mathcal{L}_{\text{sg}}$ is not at a stationary point when this velocity estimate is inaccurate, so the optimization continues moving and does not become stuck at functions that integrate (``distill") an incorrect velocity. As this match improves so does the match between the parameter updates from  $\mathcal{L}_{\text{sg}}$  and $\mathcal{L}$ at $t \neq u$. The main reason this works is that $\tilde{f}(t,t,\cdot)$ appears in other terms outside of the stopgrad, and those terms tell it where to go. This is crucial and not all stopgrad optimizations  benefit from this property.

\paragraph{Computation.}
Both terms in $\mathcal{L}_{\text{sg}}$ can be computed as expected squared norms of Jacobian-vector products (JVPs),
which  use forward-mode autodifferentiation to avoid explicitly materializing Jacobians, saving memory.
Let the derivatives $(\partial_t f, \partial_u f, \partial_x f)$ be evaluated at $(t,u,x)$.
Using PyTorch notation, 
\begin{align*}
  \text{JVP}[f, (t,u,x), (a,b,c)]  := (\partial_t f)
  \cdot 
  a +
  (\partial_u f) \cdot b + (\partial_x f)\cdot c
\end{align*}
For the first loss term, 
$a=1$, $b=0$, and $c = \dot X_t$.
For the second term:
\begin{align*}
 a=0, \quad b=0, \quad c= \dot X_t + \text{SG}[\partial_t f(t,t,X_t)] = \dot X_t - \text{SG}[\tilde{f}(t,t,X_t)].
\end{align*}
Though we have two distinct JVPs,  we can split the batch  and randomly assign either set of $(a,b,c)$ values to each batch element. 

\begin{bluebox}
\paragraph{Nongradient Flow}
Most analyses in machine learning implicitly assume that optimization performs gradient descent on a scalar loss, so that stationary points coincide with minima. This assumption does not hold here: following the update rules of $\mathcal{L}_{\text{sg}}$ does not correspond to following the gradients of any single scalar objective, as we demonstrate in \cref{appsec:nongradient}. The stopgrad structure breaks the symmetry required for the updates to be expressible as the gradient of a scalar function; such broken symmetry renders the dynamics \textit{non-conservative} \citep{balduzzi2018mechanics}, meaning they do not necessarily minimize any quantity. Thus, in the limit of small step size, the optimization dynamics correspond to a nongradient vector flow, where it is crucial to analyze stationary points rather than minima.
\end{bluebox}

\section{When Can Stopgrads Fix Things?}
\label{sec:when_stopgrads_fail}

The stopgrad in \method{} is not just a computational convenience,
but is essential for correctness. To see why, consider what happens without it.
Meanflow \citep{geng2025mean} and \method{} both replace the unknown conditional velocity $v(t,x) = \E[\dot X_t \mid X_t = x]$ with the sample-level $\dot X_t$. In standard flow matching this substitution is harmless: the loss is linear in $\dot X_t$, so pulling the conditional expectation out of the square only adds a constant, the posterior variance,
\begin{align}
V(t,x) := \Var(\dot X_t \mid X_t = x),
\end{align}
and does not move the optimum. \method{}'s second term compensates for this. In Meanflow without stopgrads, the term $\partial_x \tilde{f}\, v$ makes the integrand \emph{nonlinear} in $\dot X_t$. The $v \to \dot X_t$ replacement, without a compensating term, then introduces cross-terms involving $V > 0$ that shift the gradient away from the truth. By contrast, with the stopgrad, $\dot X_t$ shows up linearly in the Meanflow parameter updates, meaning the expected update is equal to the correct one with $v$. This was also used to show correctness in \cite{sabour2025align}.

\paragraph{Gaussian example.}
We make this concrete in 1D. Let $X_0 \sim \cN(0,1)$ and $X_1 \sim \cN(\mu, \lambda^2)$ with $\lambda > 0$.
Let us use the linear interpolant
$X_t = (1-t)X_0 + t X_1$ with $\dot X_t = X_1 - X_0$. In this setting everything is in closed form (Appendix~\ref{appsec:proofs_mf_nosg}):
the marginal variance is $\sigma_t^2 = (1-t)^2 + t^2\lambda^2$,
the flow map Jacobian is $\partial_x f^* = m(t,u) = \sigma_u/\sigma_t$,
and the posterior variance is $V(t) = \lambda^2 / \sigma_t^2 > 0$.

Consider the Meanflow functional without stopgrads,
\begin{align}
  \label{eq:mf_nosg}
  \cL_{\mathrm{nosg}}[\tilde{f}]
  = \E\Big[\big\|
    \tilde{f} - \dot X_t - (u-t)\big(\partial_t \tilde{f}
    + \partial_x \tilde{f}\, \dot X_t\big)
  \big\|^2\Big].
\end{align}
Evaluating the first variation at the true flow map $\tilde{f}^*$ in the direction $h(t,u,x) = x$ gives (see Appendix~\ref{appsec:proofs_mf_nosg} for the calculation):
\begin{align}
  \delta \cL_{\mathrm{nosg}}[\tilde{f}^*;\, h = x]
  = 2\lambda^2\,\E_{t,u}\!\left[\frac{(u-t)\,m(t,u)}{\sigma_t^2}\right] > 0.
\end{align}
Every factor in the integrand is strictly positive, so the gradient at the truth is nonzero---the true flow map is \emph{not} a stationary point of the no-stopgrad functional.
With the stopgrad, Theorem~\ref{thm:thm1} guarantees the true flow map is the unique stationary point.

\section{Experiments}

\subsection{Gaussian illustration: Meanflow with and without stopgrads}
\label{sec:gaussian_experiment}

As shown in \S\ref{sec:when_stopgrads_fail}, removing stopgrads from Meanflow destroys stationarity at the true flow map. We illustrate this concretely in the 1D Gaussian setting. We parameterize the target distribution as $X_1 \sim \cN(\mu, \lambda^2)$ and study the parameter update field of the Meanflow functional over the parameter space $(\mu, \lambda)$. The model family is well-specified:
\begin{align}
 \tilde f_{\hat\mu, \hat\lambda}
 =
\textrm{exact flow map for } \cN(0, 1) \to \cN(\hat\mu, \hat\lambda^2)
\end{align}
The truth $(\mu^*, \lambda^*)$ lies in the model family. All expectations admit closed-form expressions in the Gaussian case with linear interpolant.

We plot the update fields
in \Cref{fig:gaussian_gradient}.
With the stopgrad, the true flow map is the unique stationary point, so the negative update arrows converge to the truth. Without the stopgrad, the true parameters are not stationary: the 
update field, which is a gradient field, is nonzero at the truth, and the field drifts toward a spurious point.
 Due to the form of the Gaussian family, it happens that $\hat\mu^{\text{spurious}} = \mu^*$ exactly, with the bias entirely along $\hat\lambda$.

\begin{figure}[t]
  \centering
\includegraphics[width=\textwidth]{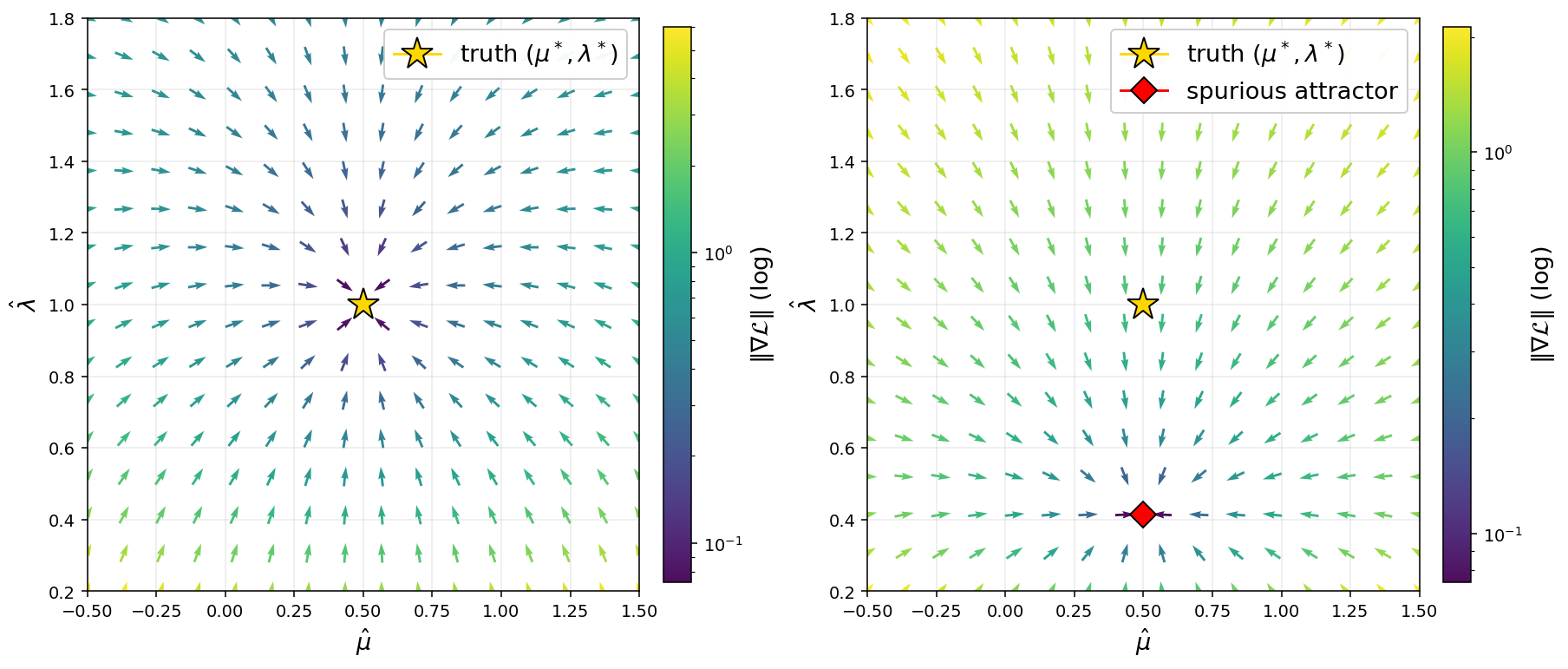}
  \caption{\textbf{Update vector field 
  of the Meanflow functional} over the well-specified Gaussian family, with data $(\mu^*, \lambda^*) = (0.5, 1.0)$ (star, $\star$). 
  We use $\nabla \mathcal{L}$ to denote
  the update field, though it is not a gradient field in the stopgrad case.
  Arrows are unit-normalized; color encodes $\| \nabla \mathcal{L} \|$ on log scale.
  \textbf{Left: with stopgrad}. The truth is the unique stationary point; arrows converge to $\star$ from all directions, and the update magnitude vanishes to machine precision at the truth. \textbf{Right: without stopgrad}. The truth is not stationary; the field flows past $\star$ and converges to a spurious attractor at $(\hat\mu, \hat\lambda) \approx (0.50, 0.41)$ (diamond, $\blacklozenge$). 
  Due to the form of the Gaussian family, it happens that $\hat\mu^{\text{spurious}} = \mu^*$ exactly, with the bias entirely along the $\hat\lambda$ component.}
  \label{fig:gaussian_gradient}
\end{figure}

\subsection{Image Modeling on CIFAR-10}

\paragraph{Architecture.}
We modify the time embedding of the existing diffusion U-Net from \citet{dhariwal2021diffusion} to handle two times. We embed both times $t$ and $u$  with the usual Fourier embeddings, concatenate them, and pass the result through a small feedforward network that outputs a hidden representation for use in the usual U-Net.  We use 128 U-Net channels and channel multipliers set to (1,2,2,2) with attention bools set to (False, False, True, False). 

\paragraph{Training Settings.}
We use dropout $0.1$. We choose the unconditional image generative modeling task (we do not condition on the class label).  We use $\alpha_t=1-t$ and $\sigma_t=t$, with noise at $X_0$ and data at $X_1$. We train for 200{,}000 steps at learning rate $2\times10^{-4}$. The FID is monitored 
every 25{,}000 training steps, and the best is reported.

\paragraph{Losses.}
We benchmark based on
Flow Matching \citep{lipman2022flow} as the reference method for sampling in the $O(100)$ step regime.
For few-step sampling, we benchmark with Meanflow \citep{geng2025mean},
which stopgrads all model derivatives in the loss to avoid backpropagation through differentiation.
We also benchmark against Lagrangian flow map matching \citep{boffi2024flow}
using the stopgrad placement suggested in \cite{boffi2025build}.
We do not compare against the Eulerian self-distillation (ESD) loss, which \cite{boffi2025build} report as unstable for image experiments, nor against the Progressive self-distillation (PSD) loss, which enforces the semigroup identity rather than a transport PDE. We focus here on the PDE-based functionals.
\begin{table}[h]
\begin{graybox}
\centering
\begin{tabular}{lrrrrr}
\toprule
\textbf{Method} & \textbf{1 step} & \textbf{10 steps} & \textbf{50 steps} & \textbf{100 steps} & \textbf{SG theory} \\
\midrule
Flow Matching & >100 & 84.39 & 4.51 & 3.97 & no \\
Meanflow & \textbf{8.67} & 5.26 & 5.12 & 4.07  & no \\
Lagrangian & 26.21& 4.08 & \textbf{3.37} & \textbf{3.20} & no  \\
\method{} & 37.29 & \textbf{4.00} & 3.87 & 3.85 & \textbf{yes}  \\
\bottomrule
\end{tabular}
\end{graybox}
\vspace{0.1cm}
\caption{FID scores versus sampling steps on CIFAR-10, computed from 50{,}000 EMA samples. The FID is monitored every 25{,}000 training steps up to 200{,}000 training steps, and the best is reported. Lower is better; \textbf{bold} indicates best per column. The ``SG theory'' column indicates whether the method has a proven stationary-point guarantee for its stopgrad-based dynamics (i.e., whether the true flow map has been proven to be a stationary point of the optimization).}
\label{tab:cifar_fid}
\end{table}

\paragraph{Results.} We report the Fr\'echet Inception Distance (FID) \citep{heusel2017gans} in \Cref{tab:cifar_fid}. No single method dominates across all step counts: Meanflow is strongest at 1 step, the Lagrangian loss is strongest at 50 and 100 steps, and \method{} attains the best FID at 10 steps.
\method{} is the only method in the comparison whose stationary points under stopgrad-induced dynamics are characterized, while remaining competitive with,  and at 10 steps surpassing, methods that do not provide such guarantees. This suggests that the stationary-point analysis we develop does not come at a cost in empirical performance.

\section{Related work}

\label{sec:flowmap_related_work}
Sampling from continuous-time generative models such as diffusion and flow models requires numerical integration. Each integration step requires a forward pass of a neural network, leading to computational costs and slow sampling. Current approaches to address this cost can be broadly categorized into two types: (1) distilling a pretrained diffusion or flow model into a few-step solver \citep{salimans2022progressive, kim2023consistency, liu2023instaflow}, and (2) learning a few-step solver \citep{zhou2025inductive}. Some approaches in this area allow for distillation as well as training from scratch \citep{song2023consistency,boffi2024flow, boffi2023probability}.

Consistency Models (CMs) \citep{song2023consistency,song2023improved, lu2024simplifying} learn a one-step map from noise to data, either by distilling a pretrained model or by learning from scratch. Distillation requires sampling trajectories from the teacher model. 
To allow for more steps after either training approach, CMs iteratively re-noise the one-step solution back to successively smaller time under the interpolant and then denoise, but this can take the solver off the probability flow.

Consistency trajectory models (CTMs) \citep{kim2023consistency} extend CMs to learn two-time maps using a combination of consistency and adversarial objectives, which requires training an additional discriminator model \citep{goodfellow2014generative}.  CTM and \method{} both target the same mathematical object, the probability flow ODE flow map (i.e., the integral of the ODE), but they learn this map through different means. CTM learns the map by distilling a teacher solver, and the losses for teacher and student involve several nested model evaluations (with data at $x_0$, for $0 \leq s \leq u \leq t \leq 1$, the teacher integrates from $t$ to $u$, then jumps from $u$ to $s$, then from $s$ to $0$; and the student jumps from $t$ to $s$ and then to $0$). The objective depends on a chosen feature-space distance and, in practice, includes DSM and GAN terms that further influence the optimum. Consequently, the CTM loss is sensitive to the quality of the ODE discretization used by the teacher (in practice CTM finds the need to use a 2nd order solver during training) and necessitates the presence of the GAN. 

Inductive Moment Matching  (IMM) \citep{zhou2025inductive} learns a few-step  model via an implicit generative model trained with MMD \citep{smola2006maximum, gretton2012kernel}, where the MMD is estimated with bias within subsets of data. In practice, the authors must use  time-weighting schedules and specific curriculum/inductive procedure to stabilize optimization.  While IMM produces high-quality image samples,
it solves the problem of marginally sampling the data distribution rather than sampling along a probability flow, where the latter is the task studied in this work.

Meanflow \citep{geng2025mean} derives a JVP-based objective for flow maps from the same PDE in \cref{eq:pde}:
\begin{align}
\mathcal{L}_{t,u}^{\text{meanflow}} := \mathbb{E}[\| \tilde{f}_\theta(t,u,X_t) - \dot X_t - (u-t)\text{SG}(\partial_x \tilde{f}_\theta \cdot \dot X_t + \partial_t \tilde{f}_\theta)\|^2].
\end{align}
Here Meanflow is written in forward time with $t$ derivatives rather than reverse time with $u$ derivatives (\cref{appsec:t_vs_u}).
Applying the stopgrad $\text{SG}$ to all model derivatives improves efficiency,  but there are no differentiated loss terms that encourage the model derivatives $\partial_t \tilde{f}_\theta$ and $\partial_x \tilde{f}_\theta$ to move toward the true flow map derivatives. This contrasts with \method{}
where $\text{SG}[\tilde{f}_\theta(t,t,x)]$ is used in place of $\mathbb{E}[\dot X_t | X_t]$, but where another term in the loss trains these two quantities to match.  Finally, between equations (10, 11) in \cite{geng2025mean}, $v$ is replaced with $\dot X_t$ where it appears quadratically, thereby pulling an expectation through a square and missing a resulting trace covariance term \citep{boffi2025build}; interestingly, this causes Meanflow to have wrong stationary points when the stopgrad \textit{is not} used. Meanflow \textit{with} stopgrad does have correct stationary points \citep{sabour2025align}.

Flow Map Matching \citep{boffi2024flow} learns a two-time flow map, enabling mapping along the probability flow in either direction without adversarial training. The Lagrangian loss requires only time derivatives, but relies on an additional invertibility loss, which encourages invertibility via swapping the time arguments:
\begin{align*}
 f_\theta(t, u, f_\theta(u,t,x)) \approx x. 
\end{align*}
Here the model is trained with the second time argument either larger or smaller than the first.
This is straightforward to compute, but gradient steps require evaluating the model and its inverse at each training step. \cite{boffi2025build} introduce stopgrad placements for the LSD, ESD, and PSD loss functionals from \cite{boffi2024flow} that substantially improve image generation performance. Given this empirical success, understanding why these stopgrads help is important: they likely have good theoretical properties, but this remains to be shown.

\textit{Distillation methods}. A complementary line of work approaches flow map learning by \emph{distilling}
the outputs of fully pretrained flow matching velocity models into few-step solvers. Specifically, the unknown $v$ in the flow map identities is taken to be a pretrained network. By contrast, we emphasize training from scratch, avoiding dependence on a teacher model and ensuring that all components of the flow map are learned end-to-end. That said, distillation can be attractive in practice when a pretrained model is already trusted (when $v_\theta$ corresponds to the endpoint distributions and the chosen $\alpha_t$ and $\sigma_t$, or when the objective is weaker---for example, to marginally sample from the approximated data distribution without explicitly solving the probability flow ODE).

\section{Discussion and Limitations}

\paragraph{Identifying functions through PDEs}
Consider a PDE solved by a sought-after mapping $f$, featuring a combination of terms such as the time and space derivatives $\partial_t f, \partial_x f$.
The PDE being solved means that $f$ sets the residual to $0$. Such an $f$ can be found by minimizing a squared error loss built from the residual.
 Which terms should be parameterized by the model and which should be approximated as part of a ground-truth loss target? If the equations can be rewritten in several ways, which yield easier or more challenging objectives? Answering this is applicable to improving training objectives for generative models as well as solving more general PDE-related tasks with neural networks and optimization.

\paragraph{Invertibility}
The loss targets an invertible function at optimum. To simplify training, we explicitly give up knowing the inverse, meaning that we only learn maps in one direction. Luckily, this is the usual scenario for generative modeling. 
For likelihoods, one can still substitute $-\partial_t f_\theta$ for $v_\theta$ in the probability flow ODE \citep{song2021maximum, boffi2023probability}. Thus this method can be seen from the perspective of training a normalizing flow \citep{tabak2010density,tabak2013family,rezende2015variational, papamakarios2021normalizing} without requiring the invertible architecture or inverse-dependent loss.

\paragraph{Architectures.}
\method{}, Flow Map Matching, Simplified Consistency Models, and Meanflow all specify models whose \textit{time-derivatives} equal the target of diffusion model training, but directly adapt architectures meant for diffusion models themselves. Example architectures used in these works  are the UNet from \citet{dhariwal2021diffusion},  the diffusion transformer from
\citet{peebles2023scalable, ma2024sit},  and the EDM architecture from
\cite{karras2022elucidating, karras2024analyzing}. These architectures may thus be suboptimal for the problem at hand, precisely because the target of interest is defined as a function often computed in many diffusion model forward passes (an integral).  In this work, compute limitations did not allow for the thorough exploration of architectures, but the authors believe that rethinking architectures is a convincing direction to improve the quality and training-efficiency of learned flow maps.

\bibliography{iclr2026_conference}

\appendix

\newpage 

\section*{Acknowledgments}

This work was partly supported by the NIH/NHLBI Award R01HL148248, NSF Award 1922658 NRT-HDR: FUTURE Foundations, Translation, and Responsibility for Data Science, NSF CAREER Award 2145542, NSF Award 2404476, ONR N00014-23-1-2634, Optum, and Apple. This work was also supported by IITP with a grant funded by the MSIT of the Republic of Korea in connection with the Global AI Frontier Lab International Collaborative Research. Mark Goldstein would like to thank the Simons Foundation and Flatiron Institute for funding and compute resources that supported this project. Anshuk Uppal would like to thank the Centre for Basic Machine Learning Research in Life Sciences, and Technical University Denmark. The authors thank Eric Vanden-Eijnden and Amirmojtaba Sabour for fruitful discussion on related work and for providing feedback on the theoretical results.

\section{Proofs for Stationary Points}

\subsection{First Variation Definitions \label{appsec:flowmap_first_variation_definition}}

We consider scalar-valued loss functions $\mathcal{L} : \mathcal{F} \to \mathbb{R}$
that map a function $f \in \mathcal{F}$ to a real value.

Define the tangent space $\mathcal{T}_f(\mathcal{F})$ at $f$.
This space contains functions $h \in \mathcal{T}_f(\mathcal{F})$ such that there exists a curve 
indexed by scalar $\epsilon$ such that for each $\epsilon$, $f_\epsilon \in \mathcal{F}$, 
and we have that $f_0=f$ and $(\frac{d}{d\epsilon} f_\epsilon)|_{\epsilon=0}=h$.

The first variation $\delta \mathcal{L}$ of such a functional
$\mathcal{L}$ evaluated at $f \in \mathcal{F}$ in direction $h \in \mathcal{T}_f(\mathcal{F})$ is defined as:
\begin{align}
    \delta \mathcal{L}[f; h] := 
    \Big(\frac{d}{d \epsilon} 
    \mathcal{L}[f + \epsilon h]\Big)_{\epsilon = 0}
\end{align}
We then have that $f^*$ is a stationary point w.r.t. $\mathcal{F}$ if $\delta\mathcal{L}[f^*;h]=0$ for all $h \in \mathcal{T}_{f^*}(\mathcal{F})$.

\subsection{Stopgrad for Functionals \label{appsec:flowmap_functional_stopgrad}}

We define the stopgrad symbol $\text{sg}$ for a functional as follows. Let $\mathcal{O}$ be a functional that maps two functions $f, g$ to a real value. Let $\mathcal{L}[f]$ be a functional that is written in terms of $\mathcal{O}$ with symbol $\text{sg}$ as $\mathcal{L}[f] := \mathcal{O}[f, \text{sg}[f]]$, then we evaluate the following two quantities as follows
\begin{align}
    \mathcal{L}[f] 
    &= \mathcal{O}[f, f]\\
    \delta \mathcal{L}[f ; h]
    &= \delta \mathcal{O}[f, f ; h, 0]
\end{align}
That is, the functional evaluates as usual but in a first variation, we do not perturb terms in $\text{sg}$. This corresponds to the stopgrad or detach() operation used in machine learning code with autodifferentiation.

\newpage

\subsection{First Variation of Original Loss Functional \label{appsec:flowmap_first_variation_L}}

Our functional $\mathcal{L}[\tilde{f}]$ 
acts on functions $\tilde{f}$.
According to the definitions in 
\Cref{appsec:flowmap_first_variation_definition}, we need to compute $\delta \mathcal{L}[\tilde{f}; h] = (\frac{d}{d \epsilon }\mathcal{L}[\tilde{f} + \epsilon h])_{\epsilon=0}$. The functional is:
\begin{align*}
T_1[\tilde{f}] :&=
\| (\partial_t f)_{(t,u,x_t)} + (\partial_x f)_{(t,u,x_t)} \dot x_t
    \|^2_{f = x + (u-t)\tilde{f}}
    \\
T_2[\tilde{f}] :&= 
\| 
(\partial_x f)_{(t,u,x_t)}
    (\dot x_t - \mathbb{E}[\dot x_t | x_t])
    \|^2_{f = x + (u-t)\tilde{f}}
\\
A[\tilde{f}]
&=
T_1[\tilde{f}] - T_2[\tilde{f}]\\
\mathcal{L}[\tilde{f}]
    &= 
    \mathbb{E}_{
    q(t,u),
    q(x_0),q(x_1)
    }\Big[A[\tilde{f}]\Big]
\end{align*}
Lets define the path $f_\epsilon$
by replacing $\tilde{f}$ with $\tilde{f}_\epsilon := \tilde{f} + \epsilon h$. Then:
\begin{align}
 f_\epsilon 
 &:= x + (u-t) \tilde{f}_\epsilon
 = x + (u-t)(\tilde{f} + \epsilon h) 
 = x + (u-t)\tilde{f} + \epsilon (u-t)h   
\end{align}

Then 
\begin{align}
    \frac{d}{d\epsilon} \mathcal{L}[\tilde{f} + \epsilon h] 
    = 
    \mathbb{E}\Big[\frac{d}{d\epsilon} A[\tilde{f}_\epsilon]\Big]
    =
    \mathbb{E}\Big[
    \frac{d}{d\epsilon}T_1[\tilde{f} + \epsilon h]
    -
    \frac{d}{d\epsilon}T_2[\tilde{f} + \epsilon h]
    \Big]
\end{align}
We first compute this derivative and then evaluate it at $\epsilon=0$.

So
\begin{align}
    \partial_t f_\epsilon 
    &= 
    \partial_t \Big[ 
        x + (u-t)\tilde{f} + \epsilon(u-t)h
    \Big]\\
    &= 
    \partial_t  x
    + 
    \partial_t 
    \Big[ 
      (u-t)\tilde{f}
      \Big] 
      + \epsilon
      \partial_t \Big[
      (u-t)h
    \Big]\\
    &= 
    (u-t)\partial_t \tilde{f} 
    - \tilde{f} 
    + 
    \epsilon 
    \Big(
    (u-t)\partial_t h 
    -
  h
    \Big)\\
    &= 
    (u-t)(\partial_t \tilde{f} 
    + \epsilon \partial_t h)
    - (\tilde{f} + \epsilon h)    
\end{align}
and 
\begin{align}
    \partial_x f_\epsilon 
    &=
    \partial_x 
    \Big[ 
    x + (u-t) \tilde{f}
    + \epsilon (u-t) h 
    \Big]
    =
    I + (u-t)\partial_x (\tilde{f} + \epsilon h)    
\end{align} 
and 
\begin{align}
    \frac{d}{d\epsilon} \partial_t f_\epsilon &= 
    \frac{d}{d\epsilon} 
  \Big[ (u-t)(\partial_t \tilde{f} 
    + \epsilon \partial_t h)
    - (\tilde{f} + \epsilon h)    \Big]  \\
&= 
    \frac{d}{d\epsilon} 
  \Big[ (u-t)\partial_t \tilde{f}
  + \epsilon (u-t) \partial_t h
  - \tilde{f} - \epsilon h
   \Big]  \\
&= 
    \frac{d}{d\epsilon} 
    \Big[
     \epsilon (u-t) \partial_t h
    - \epsilon h
    \Big]\\
&= (u-t)\partial_t h - h
\end{align}
and
\begin{align}
    \frac{d}{d\epsilon} \partial_x f_\epsilon
    &=
\frac{d}{d\epsilon}
\Big[
I + (u-t)\partial_x (\tilde{f} + \epsilon h)
\Big]\\   
&=
\frac{d}{d\epsilon}
I 
+ 
\frac{d}{d\epsilon}
 (u-t)\partial_x \tilde{f}
 +
 \frac{d}{d\epsilon}
 (u-t)\partial_x 
 \epsilon h\\
&=
 (u-t)\partial_x h
\end{align}

For the first term, 
\begin{align*}
    T_1[\tilde{f} + \epsilon h]
    =
    \| \partial_t f_\epsilon 
    + (\partial_x f_\epsilon) \dot x_t 
    \|^2 
\end{align*}
Differentiating 
\begin{align}
    \frac{d}{d \epsilon}
        T_1[\tilde{f} + \epsilon h]
        &=
2\Big(\partial_t f_\epsilon 
    + (\partial_x f_\epsilon) \dot x_t \Big)^\top
    \frac{d}{d\epsilon}
\Big(\partial_t f_\epsilon 
    + (\partial_x f_\epsilon) \dot x_t\Big) \\
        &=
2\Big(\partial_t f_\epsilon 
    + (\partial_x f_\epsilon) \dot x_t \Big)^\top
    \Big(
    \underbrace{(u-t)\partial_t h - h}
    + 
    \underbrace{(u-t)\partial_x h} \dot x_t\Big) \\
        &=
2\Big(\underbrace{(u-t)(\partial_t \tilde{f} + \epsilon \partial_t h) - (\tilde{f}+\epsilon h)}
    + (\underbrace{I + (u-t)\partial_x (\tilde{f} + \epsilon h) })\dot x_t \Big)^\top\\
    & \quad 
    \Big(
    \underbrace{(u-t)\partial_t h - h}
    + 
    \underbrace{(u-t)\partial_x h} \dot x_t\Big) 
\end{align}
So 
\begin{align}
    \frac{d}{d\epsilon} T_1[\tilde{f} + \epsilon h] \Big|_{\epsilon = 0}
        &=
2\Big((u-t)(\partial_t \tilde{f}
+
\partial_x \tilde{f} \dot x_t
) - \tilde{f}
    + \dot x_t  \Big)^\top
    \Big(
    (u-t)(\partial_t h + \partial_x h \dot x_t) - h \Big) 
\end{align}
For the second term, 
\begin{align}
    T_2[\tilde{f} + \epsilon h]
    &=
    \| \partial_x f_\epsilon (\dot x_t - v) \|^2
\end{align}
and
\begin{align}
    \frac{d}{d \epsilon}
    T_2 [\tilde{f} + \epsilon h]
    &= 
2\Big[\partial_x f_\epsilon (\dot x_t - v) \Big]^\top     
\frac{d}{d\epsilon} 
\Big[\partial_x f_\epsilon (\dot x_t - v) 
\Big]\\
    &= 
2\Big[\partial_x f_\epsilon (\dot x_t - v) \Big]^\top 
\frac{d}{d\epsilon} 
(\partial_x f_\epsilon) 
(\dot x_t - v)\\
    &= 
2\Big[\Big(    I + (u-t)\partial_x (\tilde{f} + \epsilon h)    \Big) (\dot x_t - v) \Big]^\top 
 (u-t)\partial_x h
(\dot x_t - v)
\end{align} 
So
\begin{align}
    \frac{d}{d \epsilon} T_2[\tilde{f} + \epsilon h]
    \Big|_{\epsilon = 0}
    &=
2\Big[\Big(    I + (u-t)\partial_x \tilde{f}    \Big) (\dot x_t - v) \Big]^\top 
 (u-t)\partial_x h
(\dot x_t - v)
\end{align}
So combining
\begin{align}
\frac{d}{d\epsilon} A \Big|_{\epsilon=0}
    &=
2\Big((u-t)(\partial_t \tilde{f}
+
\partial_x \tilde{f} \dot x_t
) - \tilde{f}
    + \dot x_t  \Big)^\top
    \Big(
    (u-t)(\partial_t h + \partial_x h \dot x_t) - h \Big) \\
& - 
2\Big[\Big(    I + (u-t)\partial_x \tilde{f}    \Big) (\dot x_t - v) \Big]^\top 
 (u-t)\partial_x h
(\dot x_t - v)
\end{align}
At $t=u$, this simplifies
\begin{align}
1[t=u] \frac{d}{d\epsilon}A[\tilde{f}_\epsilon]\Big|_{\epsilon =0}
    =
    2 (\dot x_t - \tilde{f})^\top (-h)
\end{align}
which is the first variation for regression that makes $\tilde{f}$ equal to $E[\dot x_t | x_t]$. Summarizing,
\begin{align}
    \delta \mathcal{L}[\tilde{f}; h]
    &=
    \mathbb{E}\Big[
    2\Big((u-t)(\partial_t \tilde{f}
+
\partial_x \tilde{f} \dot x_t
) - \tilde{f}
    + \dot x_t  \Big)^\top
    \Big(
    (u-t)(\partial_t h + \partial_x h \dot x_t) - h \Big) \\
& - 
2\big[\big(    I + (u-t)\partial_x \tilde{f}  \big) (\dot x_t - v) \big]^\top 
 (u-t)\partial_x h
(\dot x_t - v)
\Big]
\end{align}

\newpage

\subsection{Lemma: velocity matches at a stationary point of original functional \label{lemma:lemma_velocity_mathces_at_stationary_point_of_L}}
\begin{lemma}
Let $\tilde{f}^*$ be a stationary point of $\mathcal{L}$.
Assume that $\tilde{f}^*$ is bounded. 
Assume that
$\tilde{f}^*, v \in C^1$ in arguments $(t,u,x)$ and that all expectations of terms featured in the integrand of $\mathcal{L}$ (i.e., $v, \tilde{f}, \partial_t \tilde{f}, \partial_u \tilde{f}, \partial_x \tilde{f}, \ldots$) are finite.
Then we have that $\tilde{f}^*(t,t,\cdot) = v(t,\cdot)$
    where the velocity $v(t,x)=\mathbb{E}[\dot x_t | x_t]$.
\end{lemma}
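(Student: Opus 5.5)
The plan is to extract from the stationarity condition $\delta\mathcal{L}[\tilde{f}^*;h]=0$ only the contribution of the diagonal $t=u$. There the first variation computed in \Cref{appsec:flowmap_first_variation_L} reduces to the flow-matching first variation. To do this, I will choose test directions $h$ that are localized near the diagonal, so that the off-diagonal contributions vanish in a limit.

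\textbf{Step 1: split the time measure.} Write $q(t,u) = p\,q_{\mathrm{diag}}(t)\,\delta(u-t) + (1-p)\,q_{\mathrm{off}}(t,u)$ with $p>0$, where $q_{\mathrm{off}}$ gives zero mass to $\{t=u\}$. Then
\begin{align*}
\delta\mathcal{L}[\tilde{f}^*;h] = p\,\mathbb{E}_{t}\mathbb{E}\big[2(\tilde{f}^* - \dot x_t)^\top h(t,t,x_t)\big] + (1-p)\,\mathbb{E}_{q_{\mathrm{off}}}\big[D(t,u)\big],
\end{align*}
where $D$ is the integrand of the summary formula. Every term of $D$ that involves $\partial_t h$ or $\partial_x h$ carries a factor $(u-t)$, and the remaining term is $-2(\cdot)^\top h$.

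\textbf{Step 2: localize.} For a smooth compactly supported $g(t,x)$, take $h_n(t,u,x) = g(t,x)\,\phi(n(u-t))$, where $\phi\in C_c^\infty$, $0\le\phi\le 1$ and $\phi(0)=1$. These directions are $C^1$, so they lie in the tangent space of $\tilde{\mathcal F}$. On the diagonal $h_n(t,t,x)=g(t,x)$. Off the diagonal, $h_n\to 0$ pointwise because $q_{\mathrm{off}}$ puts no mass on $u=t$.

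The delicate part is the derivative terms. We have $\partial_t h_n = (\partial_t g)\phi - n g\phi'$, so $(u-t)\partial_t h_n$ contains $n(u-t)\,g\,\phi'(n(u-t))$. This is bounded uniformly, since $s\phi'(s)$ is bounded, and it tends to $0$ pointwise off the diagonal. The term $(u-t)\partial_x h_n$ is plainly bounded and tends to $0$. Boundedness of $\tilde f^*$, together with the finiteness assumptions on the expectations of $\partial\tilde f^*$, $v$ and $\dot x_t$, gives an integrable dominating function. Dominated convergence then shows that the off-diagonal term tends to $0$. I expect this to be the main obstacle: checking integrability of products such as $(\partial_x\tilde f^*\,\dot x_t)^\top \partial_x h\,\dot x_t$ and $\|\dot x_t - v\|^2$ needs second moments. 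I would add or invoke finite second moments of $x_0,x_1$ if they are not already implied by the stated assumptions.

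\textbf{Step 3: conclude.} Passing to the limit gives $\mathbb{E}_t\mathbb{E}[(\tilde f^*(t,t,x_t)-\dot x_t)^\top g(t,x_t)] = 0$ for all test $g$. By the tower property, $\dot x_t$ can be replaced by $v(t,x_t)=\mathbb{E}[\dot x_t\mid x_t]$. Density of $C_c^\infty$ test functions in $L^2(q_{\mathrm{diag}}(t)q_t(x))$ then yields $\tilde f^*(t,t,\cdot)=v(t,\cdot)$ almost everywhere. Both functions are continuous, so the equality holds on the support.
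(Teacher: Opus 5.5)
Your argument is correct. Its core mechanism is the paper's: you localize the perturbation to the diagonal with a bump in $u-t$ whose $(u-t)$-weighted $t$-derivative stays uniformly bounded (your $s\phi'(s)$, the paper's $4z\sigma'(z)$ for the sigmoid soft indicator $I_\eta$), and dominated convergence kills the off-diagonal part of $\delta\mathcal{L}$.

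Where you differ is the endgame. The paper argues by contradiction with the single self-referential direction $I_\eta(t,u)\bigl(\tilde f^*(t,t,x)-v(t,x)\bigr)$, so the surviving diagonal term is directly $2\,p(t=u)\,\mathbb{E}\|\tilde f^*-v\|^2>0$. This avoids a density argument, but the residual itself must be an admissible, bounded direction. That is why the paper needs $v\in C^1$, a uniform bound on $\tilde f^*-v$, and bounded derivatives such as $\partial_x v$ and $\partial_x\tilde f^*$ for its envelope.

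You instead test against arbitrary $g\in C_c^\infty$ and finish with the tower property and density, i.e.\ the fundamental lemma of the calculus of variations. That costs one extra step, but it needs only continuity of $v$ (mere measurability for the a.e.\ statement). The compact $x$-support of $g$ also settles the integrability worry you raise. On the support of $h_n$, continuity makes $\tilde f^*$, $\partial_t\tilde f^*$, $\partial_x\tilde f^*$ and $v$ bounded, so $|D|\le C(1+\|\dot x_t\|)^2$ and finite second moments of $x_0,x_1$ suffice. Finiteness of individual expectations alone would not control products like $\|\partial_x\tilde f^*\|\,\|\dot x_t\|^2$, and the paper's envelope step glosses over the same point.

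Your conclusion, equality on the support of $q_{\mathrm{diag}}(t)\,q_t(x)$, is also the precise form of what either argument actually proves.
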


\begin{proof}
We proceed by contradiction.
By the premise, we are at a stationary point $\tilde{f}^*$. 
Let $f^* := x + (u-t)\tilde{f}^*$.
By
the definition of stationary point in 
\cref{appsec:flowmap_first_variation_definition}, 
we have that $\delta \mathcal{L}[\tilde{f}^*; h] =0$ for all admissible $h$.
Suppose for the sake of contradiction that at this stationary point, the velocity does not match, meaning
\begin{align}
    -\partial_t f^*(t,t,\cdot) = \tilde{f}^*(t, t, \cdot)
\underbrace{\neq}_{\text{suppose for contradiction}}
    v(t,\cdot)
\end{align}

The proof proceeds by 
picking a direction for which the first variation is nonzero, providing a contradiction to being at a stationary point.
The contradiction (the direction for which the first variation is nonzero) is constructed to arise from assuming that the velocity does not match, meaning that by contradiction the velocity does not match. Specific care is taken to ensure that this direction is admissible, in this case meaning it is a continuous function. 

We name a sequence of functions $g_\eta$ such that there exists $\eta^*$ such that $g_{\eta^*}$ is continuous but yields the nonzero variation when chosen as a direction. To establish this existence under continuity, the dominated convergence theorem is used.

Let us define 
$g(t,u,x) = 1[t=u]\Big(\tilde{f}^*(t,u,x) - v(t,x)\Big)$ and evaluate it at $t=u$ so that 
$g(t,t,x) = \tilde{f}^*(t,t,x) - v(t,x)$. We then define the soft indicator $I_\eta(t,u)$ that goes to $1[t=u]$ as $\eta \to 0$ and define it as: 
\begin{align}
    I_\eta(t,u)
    &=
    1[\eta > 0]2\Big(1- \frac{1}{1 + \exp(-\frac{1}{\eta^2}(t-u)^2)}\Big)
    +
    1[\eta =0]1[t=u]
\end{align}
Using the soft indicator, 
we define a sequence of functions $g_\eta$ so that as $\eta \to 0$ we will have pointwise convergence of 
$g_\eta(t,t,x) \to g(t,t,x)$ 
which also means that $g_\eta(t,u,x)$ for $t \neq u$ goes to $0$. We pick
\begin{align}
    g_\eta(t,u,x) 
    = 
    I_\eta(t,u)
    \Big(\tilde{f}^*(t,t,x) - v(t,x)\Big)
\end{align}
\textbf{Pointwise convergence of g in eta.}
We first establish pointwise convergence of $g_\eta$ to $g$ for all arguments $(t,u,x)$ as $\eta \to 0$ from the right.
\begin{align}
\forall \hat \eta \geq 0, \quad 
    \lim_{\eta \to (\hat \eta)^+}
    g_\eta(t,u,x) 
    =
    g_{\hat \eta}(t,u,x)
\end{align}
To show this,  for any $\hat \eta > 0$, 
use continuity of $g_{\hat \eta}(t,u,x)$ in $\hat \eta$ (product of function without $\eta$ times the soft indicator which is continuous). Then to establish for $\hat \eta = 0$ , we consider two cases $t=u$ and $t \neq u$. For equality:
\begin{align}
    &\lim_{\eta \to 0^+} g_\eta(t,t,x)\\
    &=
    \lim_{\eta \to 0^+}
        I_\eta(t,t)
    \Big(\tilde{f}^*(t,t,x) - v(t,x)\Big)\\
    &=
    \lim_{\eta \to 0^+}
    \Big[
    1[\eta > 0]2\Big(1- \frac{1}{1 + \exp(0)}\Big)
    +
    1[\eta =0]
    \Big]
    \Big(\tilde{f}^*(t,t,x) - v(t,x)\Big)\\
    &=
    \lim_{\eta \to 0^+}
    \Big[
    1[\eta > 0]1
    +
    1[\eta =0]
    \Big]
    \Big(\tilde{f}^*(t,t,x) - v(t,x)\Big)\\
    &=
    \lim_{\eta \to 0^+}
    1[\eta \geq 0]
    \Big(\tilde{f}^*(t,t,x) - v(t,x)\Big)\\
    &= \tilde{f}^* - v
\end{align}
which equals $g_{\eta=0}(t,t,x)$, establishing continuity. Now for $t \neq u$.
$\forall \delta >0$ we must name an $\eta(\delta)$ such that $|g_{\eta(\delta)} - g_0| < \delta$, i.e., $|g_{\eta(\delta)} - 0| < \delta$.
   Assume $|\tilde{f}^* - v| < k$ uniformly in all input values t,x.  
\begin{align*}
    &\lim_{\eta \to 0^+}g_\eta(t,u,x) \\
    &
  =
  \lim_{\eta \to 0^+}
    \Big[   1[\eta > 0]2\Big(1- \frac{1}{1 + \exp(-\frac{1}{\eta^2}(t-u)^2)}\Big)
    +
    1[\eta =0]1[t=u] \Big]
    \Big(\tilde{f}^*(t,t,x) - v(t,x)\Big)\\
       &
  =
  \lim_{\eta \to 0^+}
    \Big[   1[\eta > 0]2\Big(1- \frac{1}{1 + \exp(-\frac{1}{\eta^2}(t-u)^2)}\Big)
     \Big]
    \Big(\tilde{f}^*(t,t,x) - v(t,x)\Big)
\end{align*}
Since we are finding a $\delta$ 
and $\eta(\delta)$ so that 
$|g_{\eta(\delta)} - 0| < \delta$
which means $|g_{\eta(\delta)}| < \delta$, 
this just means we can set $\delta$ to an upper bound on the term we are limiting: let the indicator take on $1$ as when it is 0 we are done.
\begin{align}
    \delta &= 
    \Big| 
 2(1 - \frac{1}{1 + \exp(...)})
    \Big|
k\\
&\iff \frac{\delta}{k} = 2(1 - \frac{1}{1+\exp(-\frac{1}{\eta^2}(t-u)^2)})\\
& \iff \frac{\delta }{2k} = 1 - \frac{1}{1+\exp(-\frac{1}{\eta^2}(t-u)^2)}\\
&\iff 1 - \frac{\delta}{2k} = \frac{1}{1+\exp(-\frac{1}{\eta^2}(t-u)^2)}\\
&\iff \frac{2k}{2k} - \frac{\delta}{2k} = \frac{1}{1+\exp(-\frac{1}{\eta^2}(t-u)^2)}\\
&\iff \frac{2k - \delta}{2k} = \frac{1}{1+\exp(-\frac{1}{\eta^2}(t-u)^2)}\\
&\iff \frac{2k}{2k-\delta} =1+\exp(-\frac{1}{\eta^2}(t-u)^2)\\
&\iff \frac{2k}{2k-\delta}- 1 =\exp(-\frac{1}{\eta^2}(t-u)^2)\\
&\iff \frac{2k}{2k-\delta}- \frac{2k-\delta}{2k-\delta} =\exp(-\frac{1}{\eta^2}(t-u)^2)\\
&\iff \frac{\delta}{2k-\delta} =\exp(-\frac{1}{\eta^2}(t-u)^2)\\
&\iff \log \frac{\delta}{2k-\delta} =-\frac{1}{\eta^2}(t-u)^2\\
&\iff -\log \frac{\delta}{2k-\delta} =\frac{1}{\eta^2}(t-u)^2\\
&\iff -\frac{\log \frac{\delta}{2k-\delta}}{(t-u)^2} =\frac{1}{\eta^2}\\
&\iff -\frac{(t-u)^2}{\log \frac{\delta}{2k-\delta}}= \eta^2
\end{align}
Now note that the soft indicator is strictly $< 1$ 
and that $g_\eta$ 
for fixed $(t,u,x)$ is between $-k$ and $0$ or $0$ and $k$ depending on the sign of $\tilde{f}^*-v$, but never both. So its magnitude is at most $k$. So
\begin{align}
    | g_{\eta(\delta)} - g_0|
    =
    | g_{\eta(\delta)} - 0 | 
    < \delta  < k
\end{align}
This can help us ascertain that the above square root to solve for $\eta$ will be well defined:
\begin{align}
\delta <  k
 & \implies 2k - \delta  > k
\\ & \implies \frac{1}{2k - \delta} < \frac{1}{k}
\\ & \implies 
\frac{\delta }{2k - \delta} < \frac{\delta }{k}
\\ & \implies 
\frac{\delta}{2k - \delta} < 1 
\\ &  \implies 
\log \frac{\delta}{2k - \delta} < \log 1
\\ & \implies 
\log \frac{\delta}{2k - \delta} < 0
\end{align}
meaning 
\begin{align}
    \eta(\delta) = \sqrt{\frac{(t-u)^2}{|\log \frac{\delta}{2k - \delta}|}}
\end{align}
thus establishing convergence of $g_\eta \to g$ as $\eta \to 0^+$ for each $(t,u,x)$ i.e. pointwise convergence.

Now recall the first variation of $\mathcal{L}$ (\cref{appsec:flowmap_first_variation_L}) and consider it as a function of $\eta$:
\begin{align}
   s(\eta) := \delta \mathcal{L}[\tilde{f}^*; g_\eta]
    &=
    \mathbb{E}
    \Bigg[
    2\Big((u-t)(\partial_t \tilde{f}^*) - \tilde{f}^*
    + (I + (u-t)(\partial_x \tilde{f}^*))\dot x_t \Big)^\top\\
     &\quad 
    \Big(
    (u-t)\partial_t g_\eta - g_\eta
    + 
    (u-t)(\partial_x g_\eta)\dot x_t\Big)\\
& -
2\Big[\Big(    I + (u-t)\partial_x \tilde{f}^*    \Big) (\dot x_t - v) \Big]^\top 
 (u-t)\partial_x g_\eta
(\dot x_t - v)\Bigg]
\end{align}
\textbf{Pointwise convergence of integrand in eta.}
Collect the variables $\omega = (t, u, x_0, x_1)$ and recall that $x_t$ and $\dot x_t$ are functions of $(x_0, x_1)$. Define $\phi_\eta(\omega)$ as shorthand for the expectand so that $s(\eta) = \mathbb{E}[\phi_\eta(\omega)]$. Under the boundedness assumptions and noting that $\phi_\eta$ only polynomially combines $g_\eta$  with $(\partial_t \tilde{f}^*, \partial_u \tilde{f}^*, \partial_x \tilde{f}^*, v, \partial_x v, \ldots)$, similar reasoning used to show $g_\eta \to g$ can also be used to establish that $\phi_\eta \to \phi$ pointwise. 

\textbf{Establish upper envelope.}
In addition to pointwise convergence of $\phi_\eta \to \phi$ as $\eta \to 0$ from the right, we need an upper envelope $G(\omega)$. Beyond the assumptions, the only thing needed to show that an upper envelope exists is to control the term $|(u-t)\partial_t I_\eta(t,u)|$. The derivative of the soft indicator appears since $\partial_t g_\eta = \partial_t (I_\eta g) = (\partial_t I_\eta) g + I_\eta \partial_t g$. At $\eta = 0$ we have $I_\eta(t,u) = 1[t=u]$, so $(u-t)\,\partial_t I_\eta(t,u)$ vanishes identically: it is $0$ for $t \neq u$ because $I_0$ is constant, and for $t = u$ because of the $(u-t)$ prefactor.
For $\eta > 0$, define:
\begin{align}
    z := \frac{(t-u)^2}{\eta^2}, \quad \sigma(z) := \frac{1}{1 + \exp(-z)}, 
    \quad \sigma'(z) := \frac{\exp(-z)}{[1+\exp(-z)]^2}
\end{align}
Note that for $\eta > 0$, 
\begin{align}
    \partial_t I_\eta(t,u) = 
    2 \frac{2(t-u)}{\eta^2} \sigma'(\frac{(t-u)^2}{\eta^2})
\end{align}
and so 
\begin{align}
    |(u-t) \partial_t I_\eta(t,u)| = 
    2 (u-t)
    \frac{2(u-t)}{\eta^2} \sigma'(\frac{(t-u)^2}{\eta^2})
    =
    4 z \sigma'(z) \leq \sup_{z \geq 0} 4 z \sigma'(z) \leq C_0 < \infty
\end{align}
Because $r(z):=4z\sigma'(z)$ is continuous and satisfies $r(0)=0$ and $r(z) \to 0$ as $z \to \infty$, it attains a finite maximum.  \textbf{This bound is independent of } $\boldsymbol{\eta}$.
Thus every term containing $(u-t)\partial_t g_\eta$ is uniformly bounded in $\eta$ by a product of a constant (from the bound and $I_\eta \in [0,1)$).
The other quantities in $\phi$ are bounded by assumption.
Thus such a bounding envelope $G(\omega)$ exists.

\textbf{Using dominated convergence.}
First,
\begin{align}
 \lim_{\eta \to 0^+} s(\eta) = \lim_{\eta \to 0^+} \mathbb{E}[\phi_\eta] = \lim_{\eta \to 0^+} p(t=u) \mathbb{E}[\phi_\eta \g t = u] + p(t < u) \mathbb{E}[\phi_\eta \g t < u]
\end{align}
By the pointwise convergence of $\phi_\eta \to \phi$ as $\eta \to 0^+$ and by the envelope, we can compute the limit of the first and second terms separately. Expanding the first term (with $p(t=u)$):
\begin{align*}
\lim_{\eta \to 0^+}  p(t=u) \mathbb{E}[\phi_\eta \g t = u]
    &=
    \lim_{\eta \to 0^+}  p(t=u) \mathbb{E}
    \Bigg[
    2\Big((u-t)(\partial_t \tilde{f}^*) - \tilde{f}^*
    + (I + (u-t)(\partial_x \tilde{f}^*))\dot x_t \Big)^\top\\
     &\quad 
    \Big(
    (u-t)\partial_t g_\eta - g_\eta
    + 
    (u-t)(\partial_x g_\eta)\dot x_t\Big)\\
& -
2\Big[\Big(    I + (u-t)\partial_x \tilde{f}^*    \Big) (\dot x_t - v) \Big]^\top 
 (u-t)\partial_x g_\eta
(\dot x_t - v)\g t = u\Bigg]
\\
&=
\lim_{\eta \to 0^+}  p(t=u)
    \mathbb{E}
    \Bigg[
    2\Big(- \tilde{f}^*
    + \dot x_t \Big)^\top
    \Big(
    - g_\eta \Big) \g t = u\Bigg]
\\
&=  \lim_{\eta \to 0^+}  p(t=u)
    \mathbb{E}
    \Bigg[
    2\Big(- \tilde{f}^*
    + \E[\dot x_t \g x_t] \Big)^\top
    \Big(
    - g_\eta \Big) \g t = u\Bigg]
\\
&=  \lim_{\eta \to 0^+}  p(t=u)
    \mathbb{E}
    \Bigg[
    2\Big(- \tilde{f}^*
    + v \Big)^\top
    \Big(
    - g_\eta \Big) \g t = u\Bigg]
    \\
&=  p(t=u)
    \mathbb{E}
    \Bigg[
    \lim_{\eta \to 0^+}  
    2\Big(- \tilde{f}^*
    + v \Big)^\top
    \Big(
    - g_\eta \Big) \g t = u\Bigg]
\\
&=  p(t=u)
    \mathbb{E}
    \Bigg[
    \lim_{\eta \to 0^+}  
    2 ||- \tilde{f}^*
    + v||_2^2 \g t = u\Bigg]
\end{align*}
This term is greater than zero by the assumption that the velocity does not match at the stationary point and the assumption of positive probability $p(t=u) > 0$.

Expanding the second term (with $p(t<u)$):
\begin{align*}
\lim_{\eta \to 0^+}  p(t<u) \mathbb{E}[\phi_\eta \g t < u]
    &=
    \lim_{\eta \to 0^+}  p(t<u) \mathbb{E}
    \Bigg[
    2\Big((u-t)(\partial_t \tilde{f}^*) - \tilde{f}^*
    + (I + (u-t)(\partial_x \tilde{f}^*))\dot x_t \Big)^\top\\
     &\quad 
    \Big(
    (u-t)\partial_t g_\eta - g_\eta
    + 
    (u-t)(\partial_x g_\eta)\dot x_t\Big)\\
& -
2\Big[\Big(    I + (u-t)\partial_x \tilde{f}^*    \Big) (\dot x_t - v) \Big]^\top 
 (u-t)\partial_x g_\eta
(\dot x_t - v)\g t < u\Bigg]
\\
    &=
    p(t<u) \mathbb{E}
    \Bigg[\lim_{\eta \to 0^+}
    2\Big((u-t)(\partial_t \tilde{f}^*) - \tilde{f}^*
    + (I + (u-t)(\partial_x \tilde{f}^*))\dot x_t \Big)^\top\\
     &\quad 
    \Big(
    (u-t)\partial_t g_\eta - g_\eta
    + 
    (u-t)(\partial_x g_\eta)\dot x_t\Big)\\
& -
2\Big[\Big(    I + (u-t)\partial_x \tilde{f}^*    \Big) (\dot x_t - v) \Big]^\top 
 (u-t)\partial_x g_\eta
(\dot x_t - v)\g t < u\Bigg]
\end{align*}
There's no $\eta$ in the first term in each of the two dot products that make up the expectand, so we can focus on the second term in the dot products, where $t < u$
For the second term in the first dot product:
\begin{align*}
&\lim_{\eta \to 0^+} \Big(
    (u-t)\partial_t g_\eta - g_\eta
    + 
    (u-t)(\partial_x g_\eta)\dot x_t\Big)
\\
&=\lim_{\eta \to 0^+} \Big(
    (u-t)\partial_t g_\eta - I_\eta(t,u) \Big(\tilde{f}^*(t,t,x) - v(t,x)\Big)
    + 
    (u-t)(I_\eta(t,u) \partial_x [\Big(\tilde{f}^*(t,t,x) - v(t,x)\Big)])\dot x_t\Big)
\\
&=\lim_{\eta \to 0^+} (u-t)\partial_t g_\eta
\\
&=\lim_{\eta \to 0^+} (u-t)\partial_t [I_\eta(t,u) \Big(\tilde{f}^*(t,t,x) - v(t,x)\Big)]
\\
&=\lim_{\eta \to 0^+} (u-t) (\partial_t I_\eta) g + I_\eta \partial_t g
\\
&=\lim_{\eta \to 0^+} (u-t) (\partial_t I_\eta) I_\eta(t,u) \Big(\tilde{f}^*(t,t,x) - v(t,x)\Big)
\\
&=\Big(\tilde{f}^*(t,t,x) - v(t,x)\Big) (u-t) \lim_{\eta \to 0^+} (\partial_t I_\eta) I_\eta(t,u)  =0
\end{align*}
The last equality holds because the function and its time derivative both go to zero.

This means the first product in the expectation goes to zero. By a similar argument the second term goes to zero as well.

Putting it all together
\begin{align*}
L:= \lim_{\eta \to 0^+} s(\eta) = \lim_{\eta \to 0^+} \mathbb{E}[\phi_\eta] = \lim_{\eta \to 0^+} p(t=u) \mathbb{E}[\phi_\eta \g t = u] + p(t < u) \mathbb{E}[\phi_\eta \g t < u] > 0
\end{align*}
Resultingly,
\begin{align}
    \exists \eta_0 > 0 \text{ s.t. } 
    \forall \eta^* \text{ s.t. }
    0 < \eta^* < \eta_0 \implies 
    |s(\eta^*) - L| < \epsilon
\end{align}
If we pick $\epsilon = 0.5 L$ 
then
\begin{align}
    |s(\eta^*) - L| < .5 L \implies s(\eta^*) > L - .5 L = .5 L > 0
\end{align}
so $s(\eta^*) > 0$.
But this contradicts being at a stationary point. It cannot be that $\tilde{f}^*(t,t,\cdot) \neq v(t,\cdot)$. Therefore the velocity must match.

\end{proof}

\newpage 
\subsection{Theorem 1 \label{appsec:flowmap_thm1}}

We present a proof about the functional stationary points of the \method{} functional.
We use the definitions of first variation and stationary point from \cref{appsec:flowmap_first_variation_definition}
and the definition of functional stopgrad
from \cref{appsec:flowmap_functional_stopgrad}.

\begin{theorem*}

Let $q(t,u)$ be a joint distribution 
over time pairs with support over $t \leq u$ and with positive probability on $t=u$. 
Let the family $\mathcal{\tilde{F}}$ include functions $\tilde{f}$ that are 
continuously differentiable in all arguments. 
Let $x_t = \alpha_t x_0 + \sigma_t x_1$
and $\dot x_t = \dot\alpha_t x_0 + \dot\sigma_t x_1$.
Define $f(t,u,x) := x + (u-t)\tilde{f}(t,u,x)$.
Let expectations be computed over $q(X_0)q(X_1)$.
Let $sg$ stand for stop-gradient.
Define:
{\small
\begin{align*}
    \mathcal{L}[\tilde f]
    &:=
\mathbb{E}
    \Bigg[ 
    \| (\partial_t f)|_{(t,u,x_t)}+ (\partial_x f)|_{(t,u,x_t)} \dot x_t 
    \|^2 
    -
    \| 
    (\partial_x f)_{(t,u,x_t)}
    (\dot x_t - \mathbb{E}[\dot x_t | x_t]
    )\|^2
    \Bigg]
    \\
    \mathcal{L}^{\text{sg}}[\tilde{f}]
    &:= 
    \mathbb{E}
    \Bigg[ 
    \| (\partial_t f)_{(t,u,x_t)} + (\partial_x f)_{(t,u,x_t)}  \dot x_t 
    \|^2 
    -
    \| 
    (\partial_x f)_{(t,u,x_t)} 
    (\dot x_t + \text{sg}[(\partial_t f)]_{(t,t,x_t)})
    \|^2
    \Bigg]
\end{align*}
}
Then $\tilde{f}^*$ is a stationary point of $\mathcal{L}^{\text{sg}}$ with respect to $\mathcal{\tilde{F}}$ if and only if $\tilde{f}^*$ is a stationary point of $\mathcal{L}$ with respect to $\mathcal{\tilde{F}}$.
\end{theorem*}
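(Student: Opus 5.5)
The plan is to compare the two first variations directly and show that they differ only by a term that vanishes whenever the velocity matches. By the functional stopgrad convention of \cref{appsec:flowmap_functional_stopgrad}, the first variation of $\mathcal{L}^{\text{sg}}$ is obtained exactly as in \cref{appsec:flowmap_first_variation_L}, except that $v$ is replaced by $\hat v := -\partial_t f(t,t,\cdot) = \tilde{f}(t,t,\cdot)$, and this replacement is not perturbed. This gives
\begin{align*}
\delta\mathcal{L}^{\text{sg}}[\tilde{f};h] - \delta\mathcal{L}[\tilde{f};h]
= -2\,\mathbb{E}\Big[(u-t)\Big( \big[(I+(u-t)\partial_x\tilde f)(\dot x_t-\hat v)\big]^\top \partial_x h\,(\dot x_t-\hat v) - \big[(I+(u-t)\partial_x\tilde f)(\dot x_t- v)\big]^\top \partial_x h\,(\dot x_t- v)\Big)\Big].
\end{align*}
Whenever $\hat v = v$ on $t=u$ (almost surely in $x_t$), this difference is identically zero for every admissible $h$. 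Two facts are used here. First, the $t=u$ slice contributes nothing to the $T_2$ part because of the $(u-t)$ factor. Second, the $T_1$ parts coincide.

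\textbf{Direction ($\Leftarrow$).} Let $\tilde f^*$ be stationary for $\mathcal{L}$. Lemma~\ref{lemma:lemma_velocity_mathces_at_stationary_point_of_L} gives $\tilde f^*(t,t,\cdot)=v(t,\cdot)$, so the difference above vanishes. Hence $\delta\mathcal{L}^{\text{sg}}[\tilde f^*;h]=\delta\mathcal{L}[\tilde f^*;h]=0$ for all $h$.

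\textbf{Direction ($\Rightarrow$).} Let $\tilde f^*$ be stationary for $\mathcal{L}^{\text{sg}}$. The plan is to establish the analogue of the Lemma for $\mathcal{L}^{\text{sg}}$, namely that $\tilde f^*(t,t,\cdot) = v(t,\cdot)$. The difference formula then again gives $\delta\mathcal{L}=\delta\mathcal{L}^{\text{sg}}=0$.

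To prove the analogue, I will rerun the Lemma's contradiction argument with the same soft-indicator direction $g_\eta = I_\eta(t,u)\big(\tilde f^*(t,t,x)-v(t,x)\big)$.
\begin{itemize}
\item On the slice $t=u$, the $T_2$-type term is killed by the $(u-t)$ prefactor regardless of whether $v$ or $\hat v$ appears. The remaining $T_1$ term gives, after conditioning on $x_t$ (so that $\dot x_t$ is replaced by $v$), the contribution $2p(t=u)\,\mathbb{E}\|\tilde f^* - v\|^2 > 0$.
\item On $t<u$, all $\eta$-dependence again enters through $g_\eta$, $\partial_x g_\eta$ and $(u-t)\partial_t g_\eta$. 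These converge pointwise to $0$ and are dominated using the same $\eta$-independent bound $\sup_z 4z\sigma'(z)$. The difference from the Lemma is that the stopgrad term now carries $\hat v$ in place of $v$. Since $\hat v$ is a fixed bounded $C^1$ function (because $\tilde f^*$ is), the envelope still exists.
\end{itemize}
Dominated convergence then yields $\lim_{\eta\to0^+}\delta\mathcal{L}^{\text{sg}}[\tilde f^*;g_\eta]>0$. Consequently some continuously differentiable $g_{\eta^*}$ gives a nonzero variation, which contradicts stationarity.

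\textbf{Main obstacle.} I expect the main difficulty to be the admissibility and domination bookkeeping in the ($\Rightarrow$) direction, specifically the claim that the $t<u$ contribution vanishes in the limit. This requires the same regularity assumptions as the Lemma (boundedness of $\tilde f^*$, $v$ and their derivatives), now imposed with $\hat v$ as well. I would state these assumptions explicitly and otherwise cite the Lemma's proof verbatim, since the $t=u$ computation is literally unchanged.
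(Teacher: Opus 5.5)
Your argument has the same skeleton as the paper's. Your ($\Leftarrow$) is the paper's Case 1 essentially verbatim. Your direct ($\Rightarrow$) is the contrapositive of its Case 2, but routed through the dichotomy ``velocity matches on the diagonal or not'' together with your explicit formula for $\delta\mathcal{L}^{\text{sg}}-\delta\mathcal{L}$. That organization is cleaner than the paper's in two respects:
\begin{itemize}
\item The paper's Case 2a infers velocity matching from the vanishing of the $t=u$ part of $\delta\mathcal{L}[\tilde f^*;h]$ for one particular $h$, which does not follow.
\item The last step of Case 2b-ii only asserts that $\mathcal{L}^{\text{sg}}$ penalizes a velocity mismatch. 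That is exactly the Lemma for $\mathcal{L}^{\text{sg}}$ that you actually prove, correctly noting that the $t=u$ slice is identical for both functionals and that $\hat v$ enters only the dominated $t<u$ terms.
\end{itemize}

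There is, however, a genuine gap, inherited from the paper, in passing from the Lemma's conclusion to ``the difference vanishes.'' The soft-indicator argument only controls $\tilde f^*(t,t,\cdot)-v(t,\cdot)$ at times charged by the diagonal part of $q$: its $t=u$ contribution is $2p(t=u)\,\mathbb{E}[\|\tilde f^*-v\|^2 \,|\, t=u]$. Your difference formula, by contrast, evaluates $\hat v(t,\cdot)=\tilde f^*(t,t,\cdot)$ at every $t$ charged by the part of $q$ with $t<u$. ``Positive probability on $t=u$'' does not make these two sets of times overlap.

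For ($\Leftarrow$) this step genuinely fails. Work in the paper's 1D Gaussian setting, where the true $\tilde f^{\mathrm{true}}$ is smooth and $\partial_x f^{\mathrm{true}}=\sigma_u/\sigma_t>0$. Let $q$ put mass $\tfrac12$ on $\{u=t,\ t\sim U[0,0.5]\}$ and mass $\tfrac12$ on $\{u=t+0.15,\ t\sim U[0.6,0.8]\}$. Set $\tilde f^*=\tilde f^{\mathrm{true}}+\chi(t)\beta(u-t)$, where $\chi$ is smooth with $\chi=0$ on $[0,0.55]$ and $\chi=1$ on $[0.6,0.8]$, and $\beta$ is smooth with $\beta=1$ near $0$ and $\beta=0$ on $[0.1,\infty)$.
\begin{itemize}
\item Then $f^*=f^{\mathrm{true}}$ near every $(t,u)$ in the support of $q$. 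Conditioning on $x_t$ shows $\mathcal{L}[\tilde f]=\mathbb{E}\|\partial_t f+(\partial_x f)v\|^2\ge 0$, so $\mathcal{L}[\tilde f^*]=0$, and $\tilde f^*$, being a minimizer, is stationary for $\mathcal{L}$.
\item Yet $\hat v=v+1$ for $t\in[0.6,0.8]$. Your own difference formula with $h(t,u,x)=x$, after conditioning on $x_t$, gives $\delta\mathcal{L}^{\text{sg}}[\tilde f^*;h]=-2\,\mathbb{E}[(u-t)\,\partial_x f^{\mathrm{true}}\,(v-\hat v)^2]<0$.
\end{itemize}

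The fix is to add a hypothesis, for example that the $t$-marginal of the diagonal part of $q$ has support containing that of the off-diagonal part (full support on $[0,1]$ suffices). Then use continuity of $\tilde f^*$ and $v$ to upgrade the a.e.\ diagonal match to a match at every $t$ the off-diagonal part charges. The same repair is needed in your ($\Rightarrow$) direction, which uses the identical step. With that hypothesis, plus the regularity assumptions you already flag, both directions go through.
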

\begin{proof} \textbf{Case 1: If $\tilde{f}^*$ is a stationary point of $\mathcal{L}$, then $\tilde{f}^*$ is a stationary point of $\mathcal{L}^{\text{sg}}$.}

    \begin{itemize}
    \item 
    Since $\tilde{f}^*$ 
    is a stationary point, 
    $\delta \mathcal{L}[\tilde{f}^*;\cdot]=0$

    \item 
    by \cref{lemma:lemma_velocity_mathces_at_stationary_point_of_L},
    we have that $\partial_t f^*(t,t,x) = -\tilde{f}^*(t,t,x)= -\mathbb{E}[\dot x_t | x_t]$
    where $f^* = x +(u-t)\tilde{f}^*$
    
    \item $\mathcal{L}^{\text{sg}} = \mathcal{L}$

    \item Since $\mathcal{L}^{\text{sg}} = \mathcal{L}$,
    then $\delta \mathcal{L}^{\text{sg}}[\tilde{f}^*; \cdot] = \delta \mathcal{L}[\tilde{f}^*;\cdot] = 0$
    \end{itemize}

\noindent \textbf{Case 2: If $\tilde{f}^*$ is not a stationary point of $\mathcal{L}$, then $\tilde{f}^*$ is not a stationary point of $\mathcal{L}^{\text{sg}}$.}

Since $\tilde{f}^*$ is not a stationary point of $\mathcal{L}$,
then $\exists h$ that is admissible (continuous) such that $\delta \mathcal{L}[\tilde{f}^*; h] \neq 0$.
Then,
\begin{align}
    \underbrace{\delta \mathcal{L}[\tilde{f}^*;h]}_{\text{LHS}}
    &=
    \underbrace{\mathbb{E}
    \Big[ 
    1[t=u] \ldots 
    \Big]}_{\text{RHS-L}}
    +
       \underbrace{ \mathbb{E}
    \Big[ 
    1[t\neq u] \ldots 
    \Big]}_{\text{RHS-R}}
\end{align}
If the LHS is nonzero, then one of RHS-L or RHS-R is nonzero.
Consider both cases.

\textbf{Case 2a: The RHS-R is nonzero and
RHS-L is zero.} RHS-L being zero means that the velocity matches, which means that RHS-R has the same first variation between $\mathcal{L}$ and $\mathcal{L}^{\text{sg}}$. So they must coincide regarding stationary points.

\textbf{Case 2b: RHS-L is nonzero and RHS-R is either zero or nonzero.}

\textbf{Case 2b-i.}  $\delta \mathcal{L}^{\text{sg}}[\tilde{f}^*;h] \neq 0$
directly holds. This is all we are trying to ensure anyway, so we are done in this case.

\textbf{Case 2b-ii.}
Define the soft indicator, $I_\eta$:
\begin{align}
    I_\eta(t,u)
    &=
    1[\eta > 0]2\Big(1- \frac{1}{1 + \exp(-\frac{1}{\eta^2}(t-u)^2)}\Big)
    +
    1[\eta =0]1[t=u].
\end{align}
Define the direction, $\hat h_\eta$:
\begin{align}
    \hat h_\eta(t,u,x) 
    = 
    I_\eta(t,u)
    h(t,u,x) .
\end{align}
$\hat h_\eta$ is continuously differentiable
    for any $\eta>0$. This is true because it is a product of two functions that are each continuously differentiable  ($h$ is assumed continuously differentiable). Recall the mapping $s(\eta)$ from
      \Cref{lemma:lemma_velocity_mathces_at_stationary_point_of_L},
    that maps $\eta$ to $\delta \mathcal{L}[\tilde{f}^*; \hat h_\eta]$.
    Under the conditions of dominated convergence established in
    \Cref{lemma:lemma_velocity_mathces_at_stationary_point_of_L},
    we know that $\exists \eta^* > 0$
    such that $\hat h_{\eta^*}$
    is a continuously differentiable function for which
    $\delta \mathcal{L}[\tilde{f}^*; \hat h_{\eta^*}]  \neq 0$.
    This must mean that the velocity is not matched.
    But we know that if the velocity does not match, we are not at a stationary point of $\mathcal{L}^{\text{sg}}$ either,
    since $\mathcal{L}^{\text{sg}}$ and $\mathcal{L}$ coincide on penalizing velocity matching on $t=u$.
\end{proof}

\section{Other useful derivations and results}

\subsection{Gradient updates are not optimization of one scalar objective via gradients \label{appsec:nongradient}}

We show here that there exists a data distribution and a model such that the \method{} updates are not the gradient of any single scalar objective.
We illustrate this by considering an  simple 1D setting. The key point is that, even in this restricted case, the update field induced by the stopgrad operator has
non-zero curl and therefore cannot be written as the gradient of any scalar objective
\(J(\theta)\). The form to be differentiated is:
\begin{align}
\begin{split}
  L_{\text{sg}}(\theta)
 & =
  \mathbb{E}_{X_t}\!\left[
    \big\|
      \partial_t f_\theta(t,u,X_t)
      + (\partial_x f_\theta(t,u,X_t))\,\dot X_t
    \big\|^2
  \right]\\
  & \quad -
  \mathbb{E}_{X_t}\!\left[
    \big\|
      (\partial_x f_\theta(t,u,X_t))
      \big(
        \dot X_t - \operatorname{stopgrad}[\tilde f_\theta(t,t,X_t)]
      \big)
    \big\|^2
  \right].
  \label{eq:sgflow-sg-general}
  \end{split}
\end{align}
for $f_\theta(t,u,x) = x + (u-t)\tilde{f}_\theta(t,u,x)$.
Let us work in 1D and fix values of $X_t=x$ and $\dot X_t = d$, a constant.
To accomplish this, we can choose \(X_1\) freely and set \(X_0 = X_1 - d\), so that the interpolation satisfies both \(X_t = x\) and \(\dot X_t = d\). In this case the expectations in
\eqref{eq:sgflow-sg-general} collapse to evaluation at this point (equivalently, think of us approximating with 1 Monte Carlo sample).
Now, consider the parameters \(\theta = (\theta_1,\theta_2)^\top\)
and a single time pair \((t,u)\) such that at the position $X_t=x$, 
\begin{equation}
  \partial_t f_\theta(t,u,x) = 0, \qquad
  \partial_x f_\theta(t,u,x) = \theta_1, \qquad
  \tilde f_\theta(t,t,x) = \theta_2.
\end{equation}
(For a sufficiently expressive model, such local values can be realized; we only need
existence of such a configuration.). Plugging these into \eqref{eq:sgflow-sg-general} and dropping the expectation
(single point), the \method{} functional is:
\begin{equation}
  L_{\text{sg}}
  =
  (\theta_1 d)^2
  -
  \big(
    \theta_1\big(d - \text{stopgrad}[\theta_2]\big)
  \big)^2.
  \label{eq:sgflow-simple-loss}
\end{equation}

Let \(\tilde\nabla\) denote differentiation with the stopgrad applied to
\(\theta_2\) in the second term. Differentiating
\eqref{eq:sgflow-simple-loss} with respect to \(\theta_1\) yields
\begin{align}
  \tilde\nabla_{\theta_1} L_{\text{sg}}
  &= 2\theta_1 d^2
     - 2\theta_1\big(d - \theta_2\big)^2 \\
  &= 2\theta_1\Big(d^2 - (d - \theta_2)^2\Big) \\
  &= 2\theta_1\Big(d^2 - (d^2 - 2d\theta_2 + \theta_2^2)\Big) \\
  &= 2\theta_1\big(2d\theta_2 - \theta_2^2\big) \\
  &= 2\theta_1\theta_2(2d - \theta_2).
\end{align}
Here the stopgrad on \(\theta_2\) only affects the second term and does not
change the first term. For \(\theta_2\), all occurrences appear inside a stopgrad and
the first term does not depend on \(\theta_2\), hence
\begin{equation}
  \tilde\nabla_{\theta_2} L_{\text{sg}} = 0.
\end{equation}

Thus the update field induced by \method{} in this simple example is
\begin{equation}
  g(\theta_1,\theta_2)
  := \big(g_1(\theta_1,\theta_2), g_2(\theta_1,\theta_2)\big)
  =
  \big(2\theta_1\theta_2(2d - \theta_2),\ 0\big).
\end{equation}

\medskip
\noindent
If this update were the gradient of some scalar objective
\(J(\theta_1,\theta_2)\), then the mixed partial derivatives would commute:
\begin{equation}
  g_1 = \partial_{\theta_1} J, \qquad
  g_2 = \partial_{\theta_2} J
  \quad\Rightarrow\quad
  \partial_{\theta_2} g_1
  =
  \partial_{\theta_1} g_2.
\end{equation}
However,
\begin{align}
  \frac{\partial g_1}{\partial \theta_2}
  &= \frac{\partial}{\partial \theta_2}
     \Big(2\theta_1\theta_2(2d - \theta_2)\Big)
   = 2\theta_1\big(2d - \theta_2 - \theta_2\big)
   = 4\theta_1(d - \theta_2), \\
  \frac{\partial g_2}{\partial \theta_1}
  &= 0,
\end{align}
and hence the curl of the update field is
\begin{equation}
  \frac{\partial g_1}{\partial \theta_2}
  -
  \frac{\partial g_2}{\partial \theta_1}
  = 4\theta_1(d - \theta_2),
\end{equation}
which is non-zero for generic \(\theta\) (for example, whenever
\(\theta_1 \neq 0\) and \(\theta_2 \neq d\)). Therefore \(g\) is a smooth
vector field with non-zero curl and \emph{cannot} be written as the gradient of any
scalar objective \(J(\theta_1,\theta_2)\) in general.
Put differently, a constraint on the relationship between the parameters of the model and the value of the chosen datapoint is necessary to ensure zero curl.

This 1D example is a specific instantiation of \method{} 
\eqref{eq:sgflow-sg-general} with a simple model and a single training
point. It shows that, once we introduce the stopgrad on
\(\tilde f_\theta(t,t,x)\), the resulting optimization dynamics are in
general \emph{non-conservative}. The stopgrad structure breaks the symmetry required for the updates to be the gradient of a
single scalar function. In this sense, \method{} is formally a (two-player)
\emph{game} rather than standard gradient descent on one potential function---or, if one prefers, a nongradient vector flow.

\subsection{Gaussian counterexample: Meanflow without stopgrads}
\label{appsec:proofs_mf_nosg}

We compute the first variation of the no-stopgrad Meanflow functional $\cL_{\mathrm{nosg}}$ at the true flow map for the 1D Gaussian case
$X_0 \sim \cN(0,1)$, $X_1 \sim \cN(\mu, \lambda^2)$,
$X_t = (1-t)X_0 + tX_1$, $\dot X_t = X_1 - X_0$.

\paragraph{Closed-form Gaussian quantities.}
\begin{itemize}
  \item Marginal variance:
    $\sigma_t^2 = (1-t)^2 + t^2 \lambda^2$.
  \item Velocity:
    $v(t,x) = \alpha(t)(x - t\mu) + \mu$
    where $\alpha(t) = \frac{t\lambda^2 - (1-t)}{\sigma_t^2}$.
  \item Flow map:
    $f^*(t,u,x) = m(t,u)(x - t\mu) + u\mu$
    where $m(t,u) = \sigma_u / \sigma_t$.
  \item Jacobian:
    $\partial_x f^*(t,u,x) = m(t,u)$.
\end{itemize}

\paragraph{Posterior variance.}
We compute $V(t) := \Var(\dot X_t \mid X_t)$.
Since $\dot X_t = X_1 - X_0$ and $X_t = (1-t)X_0 + tX_1$
with $X_0 \perp X_1$:
\begin{align}
  \Var(\dot X_t) &= 1 + \lambda^2, \\
  \Cov(\dot X_t, X_t) &= t\lambda^2 - (1-t), \\
  \Var(X_t) &= \sigma_t^2.
\end{align}
By the Gaussian conditional variance formula:
\begin{align}
  V(t)
  = \Var(\dot X_t) - \frac{\Cov(\dot X_t, X_t)^2}{\Var(X_t)}
  = (1 + \lambda^2) - \frac{(t\lambda^2 - (1-t))^2}{\sigma_t^2}
  = \frac{\lambda^2}{\sigma_t^2}.
\end{align}
(The last equality follows from expanding:
$(1+\lambda^2)\sigma_t^2 - (t\lambda^2 - (1-t))^2 = \lambda^2$.)
In particular, $V(t) > 0$ for all $t \in [0,1]$ when $\lambda > 0$.

\paragraph{First variation of $\cL_{\mathrm{nosg}}$ at the true flow map.}
Let $\tilde{f}^\epsilon = \tilde{f} + \epsilon h$.
The quantity inside the norm of $\cL_{\mathrm{nosg}}$ is
$R(\epsilon) = \tilde{f}^\epsilon - \dot X_t
  - (u-t)(\partial_t \tilde{f}^\epsilon
  + \partial_x \tilde{f}^\epsilon\, \dot X_t)$,
with derivative
$R'(\epsilon) = h - (u-t)(\partial_t h + \partial_x h\, \dot X_t)$.
The first variation is $\delta \cL = 2\,\E[R(0) \cdot R'(0)]$.

Write $\dot X_t = v + \xi$ where $\xi = \dot X_t - v$ satisfies $\E[\xi \mid X_t] = 0$.
At the true flow map, the transport equation kills the $v$-part of $R(0)$,
leaving $R(0) = -\partial_x f^*\, \xi$
where $\partial_x f^* = 1 + (u-t)\partial_x \tilde{f}^*$.
Similarly, $R'(0) = \phi - (u-t)\partial_x h\, \xi$
where $\phi = h - (u-t)(\partial_t h + \partial_x h\, v)$.

Computing $\E[R(0)\cdot R'(0)]$: the cross-term $\E[\partial_x f^*\, \xi\, \phi] = 0$
by the tower property (condition on $(t,u,X_t)$; $\E[\xi \mid X_t]=0$).
The remaining term gives
\begin{align}
  \delta \cL_{\mathrm{nosg}}[\tilde{f}^*; h]
  = 2\,\E\big[(u-t)\, \partial_x f^*\, V(t, X_t)\, \partial_x h\big].
\end{align}

\paragraph{Evaluating in the Gaussian case.}
Substituting $\partial_x f^* = m(t,u)$ and $V(t) = \lambda^2/\sigma_t^2$,
and choosing $h(t,u,x) = x$ (so $\partial_x h = 1$):
\begin{align}
  \delta \cL_{\mathrm{nosg}}[\tilde{f}^*;\, h = x]
  = 2\lambda^2\,\E_{t,u}\!\left[\frac{(u-t)\,m(t,u)}{\sigma_t^2}\right].
\end{align}
Since $\lambda > 0$, $(u-t) > 0$ for $t < u$,
$m(t,u) = \sigma_u/\sigma_t > 0$,
and $\sigma_t > 0$,
every factor in the integrand is strictly positive.
Therefore
$\delta \cL_{\mathrm{nosg}}[\tilde{f}^*;\, h = x] > 0$:
the first variation is nonzero, so $\tilde{f}^*$ is \emph{not} a stationary point
of $\cL_{\mathrm{nosg}}$.

\subsection{Differentiating w.r.t.\ $t$ versus $u$}
\label{appsec:t_vs_u}
In this work, $f(t,u,x)$ maps forward in time from $X_t=x$ to $X_u$ along the flow $dX_s = v(s,X_s)$. In Meanflow \citep{geng2025mean},
the map is written in reverse time. The transport PDE can be obtained by differentiating the forward-time flow map with respect to $t$, or the reverse-time flow map with respect to $u$. In both cases, let $t \leq u$:
\begin{align}
    f_{\text{fwd}}(t,u,x) = x + \int_t^u v(s, X_s)ds
      = x + \int_t^u v(s, f_{\text{fwd}}(t,s,x))ds
\end{align}
On the other hand, if moving in reverse time:
\begin{align}
    f_{\text{rev}}(t,u,x) = x + \int_u^t v(s, X_s)ds
        = x - \int_t^u v(s, f_{\text{rev}}(s,u,x)) ds
\end{align}
Differentiating via the Leibniz rule yields:
\begin{align}
    \partial_t f_{\text{fwd}}(t,u,x)
    +
    (\partial_x f_{\text{fwd}})(t,u,x)
    v(t,x) &= 0\\
    \partial_u f_{\text{rev}}(t,u,x)
    +
    (\partial_x f_{\text{rev}})(t,u,x)
    v(u,x) &= 0
\end{align}

\end{document}